\documentclass{article}

\usepackage{iclr2027_preprint,times}

\usepackage[utf8]{inputenc} %
\usepackage[T1]{fontenc}    %
\usepackage{hyperref}       %
\usepackage{url}            %
\usepackage{booktabs}       %
\usepackage{amsfonts}       %
\usepackage{nicefrac}       %
\usepackage{graphicx}
\usepackage{microtype}      %
\usepackage{xcolor}         %

\usepackage{microtype}
\usepackage{graphicx}
\usepackage{subcaption}
\usepackage{booktabs} %
\usepackage{hyperref}
\usepackage{algorithm}
\usepackage{algorithmic}
\usepackage{amsmath}
\usepackage{amssymb}
\usepackage{mathtools}
\usepackage{amsthm}
\usepackage{amsthm}
\usepackage{enumitem}
\usepackage{cancel}
\usepackage{algorithm}
\usepackage{comment}

\usepackage[symbol]{footmisc}

\usepackage{enumitem}
\setlist{nosep} 

\theoremstyle{plain}
\newtheorem{theorem}{Theorem}

\newtheorem{lemma}[theorem]{Lemma}

\theoremstyle{definition}

\theoremstyle{remark}

\newcommand{\R}{\mathbb{R}}
\renewcommand{\d}{\mathrm{d}}

\newcommand{\x}{\mathrm{x}}
\renewcommand{\u}{\mathrm{u}}
\newcommand{\hu}{\hat{\mathrm{u}}}

\newcommand{\varphiFun}{\varphi}

\newcommand{\ET}{\mathrm{ET}}
\newcommand{\method}{PaPi}

\title{%
\method{}: Parallel Picard-Duhamel Integration for Fast Neuro-Spectral Architectures}

\author{%
  Arthur Bizzi \\
  Imperial College London \\
  London, United Kingdom \\  
  \texttt{a.bizzi@imperial.ac.uk} \\
  \And
  Márcio Marques \\
  Instituto de Matemática Pura e Aplicada\\
  Rio de Janeiro, RJ, Brazil \\
  \texttt{marcio.marques@impa.br}
  \And
  Kleyton Da Costa \\
  Computer Science \\
  University College London \\
  London, United Kingdom \\  
  \texttt{kleyton.costa.25@ucl.ac.uk} \\
  \And
  Vitor Balestro\\
  Instituto de Matemática e Estatística \\
  Universidade Federal Fluminense\\
  Niterói,  RJ, Brazil \\
  \texttt{vitorbalestro@id.uff.br}
  \And
  Tiago Novello \\
  Instituto de Matemática Pura e Aplicada\\
  Rio de Janeiro, RJ, Brazil \\
  \texttt{tiago.novello@impa.br}
  \And
  João M Pereira\\
  Department of Mathematics \\
  University of Georgia\\
  Athens, GA, USA \\
  \texttt{jpereira@uga.edu}
  \And
  Daniel Yukimura \\
  Instituto de Matemática Pura e Aplicada\\
  Rio de Janeiro, Brazil \\
  \texttt{yukimura@impa.br}
}

\iclrfinalcopy
\begin{document}

\maketitle

\begin{abstract}
Neuro-spectral architectures (NeuSA) are a class of physics-informed neural networks (PINNs) that have proven effective for solving time-dependent PDEs in the frequency domain.
However, training and inference often rely on costly numerical integration of the learned vector field, typically using sequential Runge–Kutta time stepping, which limits temporal parallelism.
We introduce PaPi, an integration scheme inspired by decomposable linear/nonlinear flows that enables 
time-\textbf{Pa}rallel inference through exponential integration and a \textbf{Pi}card-Duhamel scheme.
We show that the Picard-Duhamel iteration converges exponentially for all flows, and provide explicit error bounds that depend on how close the flow is to linear.
Experiments show that in some instances \method{} accelerates training and inference by up to three orders of magnitude, achieving training times of under one second for low-dimensional PDE learning problems, while maintaining competitive predictive accuracy.
\end{abstract}

\section{Introduction}

Physics-informed neural networks (PINNs) approximate solutions of partial differential equations (PDEs) by training neural representations against the governing equations, initial and boundary conditions,
and available data. This formulation supports both forward simulation and inverse problems, including parameter identification and reconstruction from sparse measurements~\citep{raissi2019physics}. PINNs have been applied across science and engineering, including fluid mechanics, transport processes, and more~\citep{karniadakis2021physics,lu2021deepxde}.

For time-dependent problems, PINNs can struggle to capture spatial detail (spectral bias) and respect the temporal causality imposed by the initial condition. Neuro-spectral architectures (NeuSA)~\citep{bizzi-neusa} approach this problem by combining a spatial spectral representation with a system of Neural ODEs~\citep{chen2018neural}. This reformulation allows them to achieve state-of-the-art accuracy with substantially shorter training times.

The network parameterizes the dynamics of the spectral coefficients, and the solution is obtained by integrating these dynamics from the prescribed initial state.
This formulation makes time integration a computational bottleneck. Each training step requires evaluating a trajectory and differentiating a physics-informed loss through the numerical solver. The original NeuSA implementation uses explicit fourth-order Runge--Kutta (RK4) integration, whose time steps are evaluated sequentially. This dependence limits temporal parallelism even when spatial operations are performed on a GPU. Moreover, rapidly oscillating or strongly dissipative linear modes can restrict the stable step size of an explicit solver, increasing the number of steps required for each training update.

We introduce \method{}, a parallel integration and training scheme for neuro-spectral dynamics with a tractable linear component. Rather than propagating the solution sequentially through time, we construct successive approximations to the entire trajectory
using a \emph{Picard--Duhamel} formulation of the underlying ODE. Each iteration incorporates the linear dynamics exactly and updates the nonlinear correction through affine recurrences that can be evaluated using a parallel scan. This allows temporal propagation within each iteration to be carried out in parallel, substantially reducing the sequential depth of the ODE solve.

Our analysis establishes convergence of the continuous
Picard--Duhamel iteration and provides explicit error bounds for the iteration and its temporal discretization. The implementation is tailored to the training of Neuro-Spectral Architectures, reusing trajectories across optimizer steps to reduce the number
of iterations needed during training.

We evaluate \method{} on six forward PDE benchmarks and a
coefficient-identification problem, covering heterogeneous wave propagation, nonlinear transport, and dispersive dynamics. Our experiments demonstrate substantial reductions in training time over the original sequential NeuSA formulation, with competitive
or improved accuracy on most problems. For one-dimensional problems, the proposed models can reach sub-second training times, illustrating how parallel-in-time integration can fundamentally change the computational regime in which PINNs operate.

Our main contribution is to accelerate the training and evaluation of Neuro-Spectral Architectures through parallel time integration. Specifically:

\begin{itemize}

\item We introduce \method{}, a Picard--Duhamel integrator that combines exact linear propagation with parallel updates of the nonlinear correction, reducing the sequential depth of each trajectory update from $O(N)$ to $O(\log N)$.

\item We demonstrate substantial reductions in optimization time across six forward PDE benchmarks and a coefficient-identification problem, achieving sub-second optimization in a one-dimensional benchmark while maintaining competitive accuracy on most problems.

\item We establish convergence of the continuous iteration and derive explicit iteration and quadrature error bounds, clarifying how the nonlinear correction affects the number of iterations required.

\end{itemize}

\section{Related Work}

\textbf{Physics-informed neural networks.}
Early neural methods for differential equations used trial solutions satisfying initial or boundary conditions by design~\citep{lagaris1998artificial}. PINNs popularized flexible neural approximations with physical constraints imposed through residuals evaluated by automatic differentiation~\citep{raissi2019physics}, becoming central to physics-informed machine learning for forward and inverse problems~\citep{karniadakis2021physics,lu2021deepxde}.

Gradient imbalances among PINN loss terms can produce pathological training~\citep{wang2021gradient}, while neural-tangent-kernel analyses reveal large disparities in their convergence rates~\citep{wang2022when}. \citet{krishnapriyan2021characterizing} linked failures on convection-, reaction-, and diffusion-dominated problems to PDE-residual optimization difficulties despite sufficient network expressivity. Remedies include adaptive weighting~\citep{mcclenny2020self}, adaptive sampling and residual refinement~\citep{lu2021deepxde,yu2022gradient}, domain decomposition~\citep{jagtap2020extended,shukla2021parallel}, and architectures tailored to physics-informed optimization~\citep{wang2024piratenets}.

\textbf{Spectral bias and causality.}
Coordinate-based multilayer perceptrons favor low frequencies, hindering high-frequency and multiscale solutions~\citep{tancik2020fourier}. For PINNs, \citet{wang2021eigenvector} analyzed this bias through the neural tangent kernel eigenspectrum and proposed Fourier feature embeddings to improve high-frequency representation. Conventional PINN objectives also do not explicitly enforce temporal causality: \citet{wang2022respecting} showed severe training difficulties on multiscale, chaotic, and turbulent systems and introduced temporally weighted residuals to propagate information progressively from earlier to later times.

NeuSA~\citep{bizzi-neusa} addresses both issues architecturally rather than only adjusting the objective: spatial spectral discretization reduces the PDE to a finite-dimensional dynamical system with a neural vector field integrated as a Neural ODE. The spectral representation exposes spatial modes, and the initial-value formulation imposes temporal causality. Subsequent work has extended NeuSA through temporal-domain decomposition in D-NeuSA, which specializes the learned dynamics across temporal subdomains~\cite{balestro2026neurospectral}, while Perturbative-NeuSA introduces a structured residual formulation that decomposes the solution into a low-fidelity background and a learned perturbation~\cite{zhu2026perturbative}. We retain this representation and address its main computational bottleneck: repeated integration during training.

\textbf{Parallel time integration.}
Parallel-in-time methods address sequential time stepping by exposing temporal concurrency beyond spatial parallelism. Parareal~\citep{lions2001parareal} partitions time and iteratively combines inexpensive coarse propagation with accurate parallel fine solves; \citet{gander2007analysis} analyzed its convergence and connections to multiple shooting and multigrid.

\textbf{Picard iteration for Diffusion Models.}
ParaDiGMS~\citep{shih2023parallel} uses Picard iteration to refine a diffusion model's denoising trajectory, evaluating updates in parallel across time and accumulating them through prefix sums. A sliding temporal window controls memory usage and advances as earlier portions converge, trading additional computation for lower sampling latency.

Our approach uses the integral initial-value formulation: Picard iteration defines a fixed-point problem for the time-integrated vector field, allowing contributions across temporal intervals to be reorganized. For decomposable linear/nonlinear systems, the variation-of-constants (Duhamel) formula separates a tractable linear evolution from the nonlinear remainder, evaluated at the previous iterate. Propagation across the temporal grid then becomes associative composition, enabling parallel scans. We exploit this structure in NeuSA training, where repeated integration of learned high-dimensional spectral dynamics makes temporal parallelism particularly valuable.
 
\section{Method}

\subsection{Background}
We consider the following initial-boundary value problem for $t \in [0,T]$ and $\x \in \Omega \subset \R^d$:
\begin{equation} \label{eq:pde}
    \frac{d}{dt}u(t,x) = \mathbf{F}\left(t,x,u, \nabla u, \nabla\nabla u\right)\,, \quad u(0,x) = u_0(x)\, 
\end{equation}
where  $u: [0,T] \times \Omega \to \mathbb{R}^n$ denotes the (vector-valued) solution, $\nabla u$ and $\nabla\nabla u$ denote its first- and second-order spatial derivatives, and $\mathbf{F}$ is a smooth function. $u_0(\x)$ is the initial condition (Cauchy data), and we assume some boundary conditions are imposed on $\partial\Omega$.

Neuro-spectral architectures describe the time evolution of a PDE as an ODE evolving in a Hilbert space via a spectral decomposition, approximating $u(x,t)$ by its projection onto the Fourier basis \cite{boyd2001spectralmethods,canuto2007spectral,trefethen2000spectral}.
The solution $u$ is represented as a truncated Fourier expansion, stored as the coefficients tensor $\hu(t) : [0,T] \to \mathbb{C}^{c_1\times \dots \times c_d \times n}$, leading to an ODE in the Fourier coefficients themselves:
\begin{eqnarray}\label{eq:spectral-decomp}
   &\mathbf{u}(t,\x) = \sum_k \hu_{k}(t)e^{i \omega_k}, \\ \implies  &\frac{d}{dt}\hat \u = \hat{\mathbf{F}}(\hat \u), \quad \hu(0) = \hu_0,
\end{eqnarray}
where $k$ denotes a $d$-dimensional multi-index, and $\hat{\mathbf{F}}: \mathbb{C}^{c_1\times \dots \times c_d \times n} \to \mathbb{C}^{c_1\times \dots \times c_d \times n}$ denotes the finite-dimensional projection of the operator $\mathrm{F}$ in the Fourier basis.  

Importantly, $\hat {\mathbf{F}}$ does not generally admit a simple closed-form expression in the Fourier basis, and must instead be learned as a parameterized network $\widehat{\mathbf{F}}_\theta$. Nevertheless, it often has structure: a large class of canonical quasi-linear differential operators may be split into homogeneous linear and nonlinear components. In the Fourier basis, the former diagonalizes, leading to:
\begin{equation}
    \hat{\mathbf{F}}(\hu) = M \hu + \varepsilon\tilde{\mathbf{F}}(\hu),
\end{equation}

where $M$ is a diagonal linear operator representing frequencies being multiplied by scalar factors. For example, the Laplacian $\Delta$ reduces to $M\hu = |\omega|^2 \odot \hu$.

NeuSA models the temporal evolution of a PDE solution through a dynamical
system in spectral space. In the original formulation, this system is
integrated using standard time-stepping methods, such as Runge--Kutta
schemes. Since these methods advance the solution sequentially, time
integration dominates the cost of both training and inference.

\subsection{Picard Iterations}

Consider the initial value problem
\begin{equation}
     \frac{\mathrm{d}}{\mathrm{d}t}X(t)=G(t,X(t)),
    \qquad X(0)=X_0,
    \label{eq:general_ode}
\end{equation}
Given a temporal grid
$0=t_0<t_1<\cdots<t_N=T$, a standard time-stepping method introduces a strict dependency of future times on the past, meaning that while operations within each step may be parallelized, the time
steps themselves cannot be evaluated simultaneously.
Integrating \eqref{eq:general_ode} we get:
\begin{equation}
    X(t)=X_0+\int_0^t G(s,X(s))\,\mathrm{d}s.
    \label{eq:integral_ode}
\end{equation}
The solution to \eqref{eq:general_ode} is a fixed point of the integral equation \eqref{eq:integral_ode}, suggesting the Picard iteration
\begin{equation}
    X^{(k+1)}(t)
    =
    X_0+\int_0^t G(s,X^{(k)}(s))\,\mathrm{d}s,
    \qquad k=0,1,\ldots,
    \label{eq:picard_iteration}
\end{equation}
Here, $k$ is an iteration index rather than a time index:
$X^{(k)}$ approximates the entire trajectory on $[0,T]$. Hence, given
$X^{(k)}$, all evaluations $G(t_j,X^{(k)}(t_j))$, $j=0,\ldots,N$, can be performed simultaneously. After temporal discretization, the
remaining prefix integrals reduce to cumulative operations that can be
implemented using a parallel scan. Picard iteration therefore replaces
the sequential chain of time steps by a small number of global trajectory
updates.

However, time parallelism alone does not guarantee an efficient integrator.
Consider the linear autonomous system $\frac{\mathrm{d}}{\mathrm{d}t}X(t)=AX(t)$, $X(0)=X_0$;
 Picard iteration yields the \textit{Taylor expansion} for the matrix exponential, reconstructing the flow through successive polynomial approximations. When the linear
dynamics are mild, only a few iterations may suffice; in general, however, power-series methods scale poorly. Moreover, they are generally numerically unstable.

\subsection{Picard--Duhamel Integration}

We consider systems of the form
\begin{equation}
    \frac{\mathrm{d}}{\mathrm{d}t}X(t)
    =
    A(t)X(t) +\varepsilon F(X(t)),
    \qquad X(0)=X_0,
    \label{eq:weak}
\end{equation}
where $A(t)$ describes the linear dynamics and $\varepsilon F$ the
nonlinear contribution. Let $\Phi_s^t$ denote the linear propagator defined by the linear component:
\begin{equation}
    \frac{\partial}{\partial t}\Phi_s^t
    =
    A(t)\Phi_s^t,
    \qquad
    \Phi_s^s=I.
    \label{eq:linear_propagator}
\end{equation}
Thus, $\Phi_s^t v$ propagates a state $v$ from time $s$ to $t$ according
to the homogeneous linear dynamics. For numerical evaluation, we approximate the propagator over the temporal
grid by composing local matrix exponentials.
The Duhamel principle yields the analytical solution
\begin{equation}
    X(t)
    =
    \Phi_0^tX_0
    +
    \varepsilon\int_0^t
    \Phi_s^tF(X(s))\,\mathrm{d}s.
    \label{eq:duhamel}
\end{equation}
The linear evolution is therefore incorporated directly through
$\Phi_s^t$, rather than reconstructed through successive Picard iterations. Applying Picard iteration to \eqref{eq:duhamel} gives our
Picard--Duhamel scheme,
\begin{equation}
    X^{(k+1)}(t)
    =
    \Phi_0^tX_0
    +
    \varepsilon\int_0^t
    \Phi_s^tF(X^{(k)}(s))\,\mathrm{d}s,
    \qquad
    X^{(0)}(t)=\Phi_0^tX_0.
    \label{eq:duhamel-scheme}
\end{equation}
The initialization already captures the complete linear dynamics and is
exact when $\varepsilon=0$. Subsequent iterations therefore update only
the nonlinear correction while retaining the time-parallel structure of
classical Picard iteration.

\subsection{Discretization and Parallel Evaluation}

In practice, we evaluate the Picard--Duhamel scheme \eqref{eq:duhamel-scheme} using a numerical integration method. Let $0=t_0<t_1<\cdots<t_N=T$, with $h = \frac{T}{N}$ and $t_i= i h$. We use the trapezoidal method
\begin{align}
\label{eq:trapezoidal_method_def}
	\bar{X}_{j}^{(k+1)}:=\Phi_0^{t_j} X_0 + \frac{\varepsilon h}{2}\left(\Phi_0^{t_j} g^{(k)}_{0} +  2\sum_{i=1}^{j-1} \Phi_{t_i}^{t_j} g^{(k)}_{i} + g^{(k)}_{j}\right),
\end{align}
where $g_j^{(k)} = F(\bar{X}_{j}^{(k)})$.
A crucial property of this numerical integration scheme is that it can be evaluated in parallel over time. All nonlinear terms
$F(X_j^{(k)})$ depend only on the previous trajectory and can therefore
be evaluated simultaneously. Furthermore, the following recursion allows us to evaluate the cumulative integration as a parallel scan. Letting $P_j = \Phi_{t_j}^{t_{j+1}}$, we have
$\bar{X}_{j+1}^{(k+1)} = P_j \bar{X}_{j}^{(k+1)} + b_{j}$ with $ b_{j}=  \frac{\varepsilon h}2 (P_j g_{j}^{(k)} + g_{j+1}^{(k)})$. Two such maps compose according to
$ (P_b,b_b)\circ(P_a,b_a) = \left(P_bP_a,\;P_bb_a+b_b\right)$. Since composition of affine maps is associative, 

we can evaluate the iteration with a parallel scan, constructed through a tree of partial sums with $O(\log N)$ sequential depth and $O(N)$ total work. See Appendix~\ref{app:parallel_scans} for more details.

\subsection{Convergence and Error Bounds}

In the following theorem we establish convergence of the continuous Duhamel--Picard iteration \eqref{eq:duhamel-scheme}, and show that successive iterates approximate the solution with a factorially decaying error.

\begin{theorem}\label{thm:convergence}
	Consider \eqref{eq:weak} and let $F$ be any Lipschitz-continuous function with constant $L$ and suppose $\|A(t)\|_{\text{op}} \leq \alpha\, \forall t$. Then the Duhamel-Picard scheme \eqref{eq:duhamel-scheme} converges to $X$, the solution of \eqref{eq:weak}. Moreover, there exists a constant $C$ depending only on $X_0$, $F(X_0)$ and $L$ such that the following error bound holds:
	$$\|X^{(k)}(t) - X(t)\|\le C  \frac{L^k(\varepsilon t)^{k+1}}{(k+1)!}e^{(\alpha +\varepsilon L)t}$$
\end{theorem}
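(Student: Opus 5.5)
We compare the iterates against the fixed-point identity \eqref{eq:duhamel} and bound the error by induction on $k$. Write $E_k(t) := \|X^{(k)}(t) - X(t)\|$. The first ingredient is a bound on the propagator. Since $\partial_t \Phi_s^t = A(t)\Phi_s^t$ with $\|A(t)\|_{\text{op}}\le\alpha$, Grönwall's inequality applied to $t\mapsto\|\Phi_s^t v\|$ gives $\|\Phi_s^t\|_{\text{op}} \le e^{\alpha(t-s)} \le e^{\alpha t}$ for $0\le s\le t$.

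Subtracting \eqref{eq:duhamel} from \eqref{eq:duhamel-scheme} and using the Lipschitz property of $F$ gives
\begin{equation*}
E_{k+1}(t) \le \varepsilon \int_0^t e^{\alpha(t-s)} L\, E_k(s)\,\mathrm{d}s .
\end{equation*}
It is convenient to work with $\tilde E_k(t) := e^{-\alpha t}E_k(t)$. This weighting absorbs the propagator growth and leaves the clean recursion $\tilde E_{k+1}(t)\le \varepsilon L\int_0^t \tilde E_k(s)\,\mathrm{d}s$.

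The base case is the step I expect to be the main obstacle, because the constant $C$ must depend only on $X_0$, $F(X_0)$ and $L$. We have $E_0(t) = \|X(t)-\Phi_0^tX_0\| = \varepsilon\|\int_0^t \Phi_s^t F(X(s))\,\mathrm{d}s\|$. To control $\|F(X(s))\|$ by data, I will use
\begin{equation*}
\|F(X(s))\|\le \|F(X_0)\| + L\|X(s)-X_0\|.
\end{equation*}
Bounding $\|X(s)-X_0\|$ through the Duhamel formula and Grönwall then yields $\|X(s)\|\le c\,e^{(\alpha+\varepsilon L)s}$, with $c$ built from $\|X_0\|$ and $\|F(X_0)\|/L$. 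Equivalently, one can set $K := \|F(X_0)\| + L\|X_0\|$, so that $\|F(x)\|\le K + L\|x\|$, and bound the solution by $(\|X_0\| + K/L)e^{(\alpha+\varepsilon L)s}$. This gives
\begin{equation*}
\tilde E_0(t)\le C\,\varepsilon t\, e^{\varepsilon L t},
\end{equation*}
with $C$ of the form $\|F(X_0)\|+L\|X_0\|$, possibly times a harmless factor. This matches the claimed bound at $k=0$.

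For the induction, assume $\tilde E_k(s)\le C L^k \varepsilon^{k+1} s^{k+1} e^{\varepsilon L s}/(k+1)!$. Since $e^{\varepsilon L s}\le e^{\varepsilon L t}$ for $s\le t$, the recursion gives $\tilde E_{k+1}(t)\le C L^{k+1}\varepsilon^{k+2}e^{\varepsilon L t}\int_0^t s^{k+1}\,\mathrm{d}s/(k+1)!$. The integral produces $t^{k+2}/(k+2)!$, which is exactly the stated form. Multiplying back by $e^{\alpha t}$ recovers the bound of Theorem~\ref{thm:convergence}.

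Convergence follows immediately, uniformly on $[0,T]$: $(\varepsilon L T)^{k}/k!\to 0$ for every fixed $T$, so the bound tends to zero. Existence and uniqueness of $X$ are standard consequences of the Lipschitz assumption. If needed, the argument can be made self-contained by showing that the same factorial estimates on $X^{(k+1)}-X^{(k)}$ make the iterates Cauchy in the sup norm, with the limit satisfying \eqref{eq:duhamel}.
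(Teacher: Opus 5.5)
Your argument is correct, but it takes a different route from the paper's.

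\textbf{The paper's route.} It never compares with $X$ directly. Instead it bounds the increments $Y^{(k)}=X^{(k+1)}-X^{(k)}$. The base case $\|X^{(1)}-X^{(0)}\|\le\varepsilon(L\|X_0\|+\|F(X_0)\|)\,t\,e^{\alpha t}$ is computed from the explicit linear flow $X^{(0)}=\Phi_0^tX_0$ together with $\|\Phi_0^s-I\|_{\text{op}}\le e^{\alpha s}-1$. The induction then gives $\|Y^{(k)}(t)\|\le C L^k(\varepsilon t)^{k+1}e^{\alpha t}/(k+1)!$, with no $e^{\varepsilon Lt}$ factor. The error is recovered by summing the tail $\sum_{i\ge k}\|Y^{(i)}\|$, and $e^{\varepsilon Lt}$ enters only at this point, via $(i+k+1)!\ge(k+1)!\,i!$.

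\textbf{Your route.} The factor $e^{\varepsilon Lt}$ comes instead from the a priori Gr\"onwall growth bound on the true solution, used once in the base case. The weighted recursion $\tilde E_{k+1}\le\varepsilon L\int_0^t\tilde E_k$ then carries it along with no tail summation.

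\textbf{What each buys.} The paper's approach gives existence as a by-product: the iterates are Cauchy, so the solution is constructed rather than assumed. (The paper leaves implicit that the limit actually satisfies \eqref{eq:duhamel}.) It also yields the exact constant $C=L\|X_0\|+\|F(X_0)\|$, and a bound on the computable increments that can serve as a stopping criterion. Your approach takes existence, uniqueness and the Duhamel representation of $X$ as standard input. In return, identifying the limit with $X$ is automatic and the induction is shorter. The price is a constant inflated by a small factor, up to $3$ through the $K+L\|x\|$ growth bound.

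\textbf{Recovering the paper's constant.} You can remove that loss by running Gr\"onwall directly on $w(s)=\|F(X(s))\|$. Combine $w(s)\le\|F(X_0)\|+L\|X(s)-X_0\|$ with the Duhamel formula and $\|\Phi_0^s-I\|_{\text{op}}\le e^{\alpha s}-1$. This gives $e^{-\alpha s}w(s)\le\|F(X_0)\|+L\|X_0\|+\varepsilon L\int_0^s e^{-\alpha r}w(r)\,\mathrm{d}r$. Hence $E_0(t)\le\varepsilon(\|F(X_0)\|+L\|X_0\|)\,t\,e^{(\alpha+\varepsilon L)t}$, which is exactly the paper's constant and also avoids dividing by $L$.
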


We next consider the numerical integration scheme that evaluates the integrals in the Duhamel--Picard iteration. The total error is decomposed into the error of the continuous Duhamel--Picard iteration and the quadrature error introduced at each time step. Combining Theorem~\ref{thm:convergence}
with the quadrature error for trapezoidal rules yields the following convergence estimates.

\begin{theorem}\label{thm:convergence_quadrature}
	Consider \eqref{eq:weak} under the same conditions of Theorem~\ref{thm:convergence} and let $C_2:=\sup_{\substack{k\ge0\\0\le s\le t\le T}}
	\left\|
	\frac{d^2}{ds^2}
	\left(\Phi_s^tF(X^{(k)}(s))\right)
	\right\|$. Then the error of the discrete Duhamel--Picard is bounded as follows.
	$$\|\bar{X}_{j}^{(k)} - X(t_j)\| \le\underbrace{C  \frac{L^k(\varepsilon t_j)^{k+1}}{(k+1)!}e^{(\alpha +\varepsilon L)t_j}	}_{\text{Duhamel-Picard iteration error}} + \underbrace{ \frac{\varepsilon h^3 C_2}{12} e^{\alpha t_{j-1}} \frac{(1 + \varepsilon L h)^{j+k - 1} - 1}{\varepsilon L h}}_{\text{Quadrature error}}$$
\end{theorem}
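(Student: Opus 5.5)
We split the error with the triangle inequality,
$$\|\bar X_j^{(k)}-X(t_j)\|\le \|X^{(k)}(t_j)-X(t_j)\| + \|\bar X_j^{(k)}-X^{(k)}(t_j)\|.$$
The first term is bounded directly by Theorem~\ref{thm:convergence}, which gives the Duhamel--Picard iteration error. The rest of the work is the second term, $E_j^{(k)}:=\|\bar X_j^{(k)}-X^{(k)}(t_j)\|$. This compares the discrete iterate with the continuous iterate at the same iteration index. We initialize both with $X^{(0)}(t)=\Phi_0^tX_0$, so $E_j^{(0)}=0$.

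To obtain a recursion for $E_j^{(k+1)}$, we subtract \eqref{eq:trapezoidal_method_def} from \eqref{eq:duhamel-scheme} at $t=t_j$. We then insert the intermediate quantity $\frac{\varepsilon h}{2}\sum_i w_i\Phi_{t_i}^{t_j}F(X^{(k)}(t_i))$, where $w_i$ are the trapezoidal weights. This splits the difference into two pieces.

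The first piece is the pure trapezoidal quadrature error for the smooth integrand $s\mapsto \Phi_s^{t_j}F(X^{(k)}(s))$. Its second derivative is bounded by $C_2$, so on each of the $j$ subintervals the local error is at most $\frac{h^3C_2}{12}$. On a subinterval ending at $t_{i+1}$ we use the semigroup property $\Phi_s^{t_j}=\Phi_{t_{i+1}}^{t_j}\Phi_s^{t_{i+1}}$. Grönwall applied to \eqref{eq:linear_propagator} gives $\|\Phi_s^t\|\le e^{\alpha(t-s)}$, which is where the factors $e^{\alpha t_{j-1}}$ come from.

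The second piece is a propagated-error term. It is bounded using the Lipschitz constant:
$$\frac{\varepsilon h}{2}\sum_i w_i\|\Phi_{t_i}^{t_j}\|\,L\,E_i^{(k)}.$$

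Rather than estimating the full sum at once, we use the one-step affine recurrence from the scan formulation. Both the discrete and continuous iterates satisfy a step relation from $t_{j}$ to $t_{j+1}$ of the form $Y_{j+1}=P_jY_j+(\text{increment over }[t_j,t_{j+1}])$. Taking differences gives
$$E_{j+1}^{(k+1)}\le e^{\alpha h}E_j^{(k+1)}+\frac{\varepsilon L h}{2}\bigl(e^{\alpha h}E_j^{(k)}+E_{j+1}^{(k)}\bigr)+\frac{\varepsilon h^3C_2}{12}e^{\alpha h}.$$
This is a two-index linear recursion in $(j,k)$. Factoring out $e^{\alpha t}$ by setting $\tilde E_j^{(k)}=e^{-\alpha t_{j-1}}E_j^{(k)}$ turns it into a recursion with constant coefficients. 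For these we use the crude majorant $\tilde E_{j+1}^{(k+1)}\le \tilde E_j^{(k+1)}+\varepsilon L h\max(\tilde E_j^{(k)},\tilde E_{j+1}^{(k)})+\frac{\varepsilon h^3C_2}{12}$.

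We then prove by double induction on $j+k$ the claim
$$\tilde E_j^{(k)}\le \frac{\varepsilon h^3C_2}{12}\,\frac{(1+\varepsilon Lh)^{j+k-1}-1}{\varepsilon Lh}.$$
The base cases $k=0$ (error zero) and $j=0$ (both iterates equal $X_0$) hold trivially. For the inductive step, the right-hand side $S_m=\sum_{r=0}^{m-1}(1+\varepsilon Lh)^r$ (times the constant) satisfies $S_{m+1}=(1+\varepsilon Lh)S_m+1$. The recursion adds one geometric layer each time $j$ or $k$ increases.

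The main obstacle is making this induction close. The recursion receives contributions both from the same iteration at the previous time, with coefficient $1$ after rescaling, and from the previous iteration, with coefficient $\varepsilon Lh$. Naively this yields a binomial sum over paths rather than a geometric one in $j+k$. The first thing to try is to use monotonicity of the majorant in both indices, so that the two terms combine into $(1+\varepsilon Lh)$ times the bound at total index $j+k-1$. If the constants do not match exactly, we will adjust the handling of the endpoint weight $g_{j+1}$ by absorbing it into the $\max$. We will also check that the $e^{\alpha h}$ factors telescope to $e^{\alpha t_{j-1}}$ rather than $e^{\alpha t_j}$, which is why the endpoint term carries no propagator.
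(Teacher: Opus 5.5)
Your argument is correct. Up to the recursion it follows the paper: the same triangle split, the same one-step affine relation from the scan, and the same two-index recurrence with $q=e^{\alpha h}$, $r=\varepsilon Lh/2$, $s=\varepsilon h^3C_2/12$.

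One fix is needed: drop the factor $e^{\alpha h}$ on the forcing term. $C_2$ already bounds the second derivative of the whole integrand $\Phi_s^{t_{j+1}}F(X^{(k)}(s))$, so the local trapezoidal error on $[t_j,t_{j+1}]$ is at most $s$ directly. After dividing by $e^{\alpha t_j}$ the forcing is then $se^{-\alpha t_j}\le s$. With your extra factor, the induction would only give $se^{\alpha h}$ at $(j,k)=(1,1)$ instead of $s$.

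With that fix, your induction closes exactly, and no adjustment of the endpoint weight is needed. Since $S_m=\sum_{i<m}(1+\varepsilon Lh)^i$ is increasing in $m$,
\[
\tilde E_{j+1}^{(k+1)}\le sS_{j+k}+\varepsilon Lh\,sS_{j+k}+s=sS_{j+k+1}.
\]

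The genuine difference from the paper is how the two-index recursion is solved. Lemma~\ref{lemma:m_recursion_2} sums the rescaled inequality over $j$ and inducts on $k$, using the hockey-stick and Pascal identities, to get the explicit bound $sq^{j-1}\sum_{\ell=0}^{k-1}(2r)^\ell\binom{j+\ell}{\ell+1}$. Only then does it coarsen $\binom{j+\ell}{\ell+1}\le\binom{j+k-1}{\ell+1}$ to reach the geometric expression. Your majorant $r(a+b)\le 2r\max(a,b)$, combined with an ansatz depending only on $j+k$, reaches the stated bound in two lines and avoids the combinatorics entirely.

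What the paper's intermediate form retains is uniformity in the number of sweeps. For $\varepsilon Lh<1$ the binomial sum is at most $\frac{(1-\varepsilon Lh)^{-j}-1}{\varepsilon Lh}$, by the negative binomial series. This gives an $O(h^2)$ quadrature error independent of $k$, whereas the stated bound (and your $(j+k)$-ansatz) grows geometrically in $k$; the paper discards this in the final statement. Your max-recursion can recover it with the same kind of one-line induction on $j$, using the $k$-independent ansatz $B_j=s\frac{(1-\varepsilon Lh)^{-j}-1}{\varepsilon Lh}$, which satisfies $B_j+s=(1-\varepsilon Lh)B_{j+1}$.
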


The factor $h^3$ is the local truncation-error factor of the trapezoidal method. Furthermore, the quadrature error increases with $j$ at least at a linear rate, which leads to the expected quadrature error of order $h^2$ of the trapezoidal method. The proof of Theorems~\ref{thm:convergence} and \ref{thm:convergence_quadrature} is covered in Appendix~\ref{app:proofs}.

\subsection{Training and inference with \method{}}\label{sec:training}

We now describe how the scheme is used inside a neuro-spectral model. The state $X$ is the vector of spectral coefficients $\hu$ (or $(\hu,\hat{\mathrm v})$ for second-order problems), the linear part $A$ is the analytic operator of~\eqref{eq:spectral-decomp} that NeuSA initialises around, and $\varepsilon F$ is the learned correction $\varepsilon \mathbf{F}_\theta$. In every problem we consider, $A$ is diagonal or block-diagonal in the basis, so $\Phi_s^t$ is a closed-form exponential and is cheap to evaluate.

\textbf{Linear part.}
Theorem~\ref{thm:convergence_quadrature} indicates that the speed of convergence of the Picard-Duhamel scheme may be substantially improved by reducing $\varepsilon$, the magnitude of the nonlinear part of the dynamics. Any linear term of the PDE whose propagator is available in closed form can then be moved from the network into the linear component; by Theorem~\ref{thm:convergence} this shrinks the constant $\varepsilon k$ that controls the number of sweeps, and it also lowers the effective stiffness the network has to represent.

\textbf{Warm-start training.}
Training minimises NeuSA's PDE residual along the trajectory, with the RK4 rollout replaced by the iteration~\eqref{eq:duhamel-scheme}. Consecutive optimiser steps change $\theta$ only slightly, so the trajectory produced at the previous step is already close to the fixed point of the current one. We therefore carry the trajectory across steps: instead of starting the iteration \ref{eq:duhamel-scheme} from the beginning at each step, the previous trajectory is taken as $X^{(0)}$ (with no gradient flowing into it):
\begin{equation}
    X^{(k+1)}(t)
    =
    \Phi_0^tX_0
    +
    \varepsilon\int_0^t
    \Phi_s^tF(X^{(k)}(s))\,\mathrm{d}s,
    \qquad
    X^{(0)}(t)=X^{(0)}_{n-1}(t).
    \label{eq:duhamel-scheme-warm}
\end{equation}
$K$ new sweeps are computed from it, and the loss is evaluated on $X^{(K)}$ and differentiated through those $K$ sweeps only (Algorithm~\ref{alg:papi}). Two sweeps per step suffice on all but the most strongly nonlinear problem we consider. Since the residual of the iterate is smallest at early times and grows with $t$ like $(\varepsilon k t)^{K+1}$ (Theorem~\ref{thm:convergence}), we weight the residual in time by $w(t)=(1+t)^{-2}$, which anchors the loss on the part of the trajectory that is already converged while the later part catches up over subsequent steps. 
\begin{algorithm}[t]
\caption{One optimiser step of \method{} (warm-started Picard--Duhamel training).}
\label{alg:papi}
\begin{algorithmic}[1]
\REQUIRE parameters $\theta$, carried trajectory $\bar X=(\bar X_0,\ldots,\bar X_N)$, propagators $P_j=\Phi_j^{j+1}$, sweeps $K$, weights $w_j$
\STATE $X^{(0)}\leftarrow \operatorname{stop\_gradient}(\bar X)$
\FOR{$k=1,\ldots,K$}
  \STATE $g_j \leftarrow \varepsilon F_\theta(X^{(k-1)}_j)$ for all $j$ \hfill (parallel over $j$)
  \STATE $b_j \leftarrow$ quadrature of $\int_{t_j}^{t_{j+1}}\Phi_s^{t_{j+1}}g(s)\,\mathrm ds$ from $g_j,g_{j+1}$
  \STATE $X^{(k)}\leftarrow \operatorname{scan}\big((P_j,b_j)_{j}\big)$ from $X_0$ \hfill (parallel prefix)
\ENDFOR
\STATE $\mathcal L(\theta)\leftarrow \sum_j w_j\,\|\text{residual}_j(\theta, X^{(K)})\|^2/\sum_j w_j$
\STATE $\theta\leftarrow\operatorname{Adam}(\theta,\nabla_\theta\mathcal L)$; \quad $\bar X\leftarrow X^{(K)}$
\end{algorithmic}
\end{algorithm}

\textbf{Exponential quadrature rules.}
We use an exponential variant of the trapezoidal method \eqref{eq:trapezoidal_method_def} that integrates $\Phi_s^{t}$ exactly against a piecewise-linear interpolant of $g$. For brevity, we introduce it in Appendix~\ref{app:exponential_trapezoidal}. 

\textbf{Inference.}
The trained model is deployed with a cold iteration from the linear flow, $X^{(0)}=\Phi_0^tX_0$, for a fixed number of sweeps
; eight sweeps suffice on all but the stiffest problem we consider, where 32 are used. Each sweep is one batched network evaluation and one scan of depth $O(\log N)$, so inference has sequential depth $O(K\log N)$ rather than $O(N)$.

\section{Experiments}\label{sec:experiments}

We evaluate \method{} on the 2D wave equation in a three-layer medium (2D Layers) and in the Marmousi model, the 3D wave equation in a two-layer medium (3D Layers), the coupled 2D Burgers’ equation, the 1D sine-Gordon equation, and the Korteweg–de Vries (KdV) equation, whose dispersive linear term makes the spectral ODE stiff. We also test \method{} on an inverse Burgers' problem to identify PDE coefficients from data efficiently. We describe the PDEs and experimental setup in Appendix~\ref{app:details}.

\textbf{Baselines.}
We follow the benchmarks of \citet{bizzi-neusa}, comparing against PINN~\citep{raissi2019physics}, QRes~\citep{bu2021qres}, FLS~\citep{wong2024fls}, and NeuSA. Appendix~\ref{app:details} provides the baseline configurations and adaptations for the additional KdV and inverse Burgers' problems.

\textbf{Implementation details.}
\method{} is implemented in JAX~\citep{jax2018github}. All experiments were run on an NVIDIA RTX~5090 GPU. For comparison with NeuSA, \method{} replaces the integrator but retains each problem's other parameters, including the basis, mode count, dimension-wise MLP with two hidden layers and residual loss, unless stated otherwise. The linear part $A$ matches NeuSA's initial linear operator; Appendix~\ref{app:details} lists exceptions, including the sine-Gordon mass term.
Training follows Section~\ref{sec:training} with $K=2$ warm-started Picard--Duhamel sweeps per optimiser step (six on Marmousi), residual weight $(1+t)^{-2}$, and Adam with the reported spectral-model learning rates, decay schedules and step counts unless stated otherwise. Inference uses 8 cold sweeps (32 on KdV; Appendix~\ref{app:sweeps}). Training time is the wall-clock time of the optimisation loop, excluding JAX compilation (5--20\,s per run). Forward results are means over five seeds per configuration; Appendix~\ref{app:tables} gives standard deviations. We report relative $L^2$ (rMSE) and $L^1$ (rMAE) errors against the reference solution on the evaluation grid, plus relative coefficient errors for the inverse problem.

\subsection{Comparisons}\label{sec:neusa-benchmarks}

Table~\ref{tab:main} reports rMSE and training time for the six forward problems; Figure~\ref{fig:wave} shows the Marmousi solution, and Appendix~\ref{app:tables} gives rMAE, standard deviations, the error as a function of time and a time-against-error plot for every benchmark.

\begin{table}[!ht]
\centering
\caption{\textbf{Quantitative comparison.} Mean rMSE and training time over five seeds for each model. TT denotes training time in seconds. \method{} has the shortest training time on all six problems and the lowest rMSE on five. Appendix~\ref{app:tables} reports standard deviations and rMAE.}
\label{tab:main}
\small
\setlength{\tabcolsep}{3pt}
\resizebox{\textwidth}{!}{%
\begin{tabular}{l rr rr rr rr rr rr}
\toprule
& \multicolumn{2}{c}{2D Layers} & \multicolumn{2}{c}{Marmousi} & \multicolumn{2}{c}{3D Layers} & \multicolumn{2}{c}{2D Burgers} & \multicolumn{2}{c}{Sine-Gordon} & \multicolumn{2}{c}{KdV}\\
\cmidrule(lr){2-3}\cmidrule(lr){4-5}\cmidrule(lr){6-7}\cmidrule(lr){8-9}\cmidrule(lr){10-11}\cmidrule(lr){12-13}
Model & rMSE & TT & rMSE & TT & rMSE & TT & rMSE & TT & rMSE & TT & rMSE & TT\\
\midrule
PINN  & 0.549 & 420 & 0.618 & 427 & 0.062 & 4000 & 0.183 & 605  & 0.130 & 569 & 6.1e-3 & 908\\
QRes  & 0.150 & 765 & 0.401 & 707 & 0.026 & 5496 & 0.073 & 1178 & 0.013 & 781 & 9.5e-3 & 1240\\
FLS   & 0.492 & 408 & 0.712 & 408 & 0.071 & 4050 & 0.174 & 578  & 0.129 & 594 & 1.5e-2 & 935\\
NeuSA & 0.069 & 508 & 0.266 & 604 & \textbf{0.008} & 577 & 0.053 & 119 & 9.0e-4 & 234 & 3.6e-3 & 1322\\
\method{} (ours) & \textbf{0.068} & \textbf{12} & \textbf{0.154} & \textbf{66} & 0.018 & \textbf{251} & \textbf{0.028} & \textbf{30} & \textbf{5.6e-4} & \textbf{0.3} & \textbf{1.2e-3} & \textbf{3.2}\\
\bottomrule
\end{tabular}}
\end{table}

\begin{figure}[!ht]
    \centering
    \includegraphics[width=0.8\textwidth]{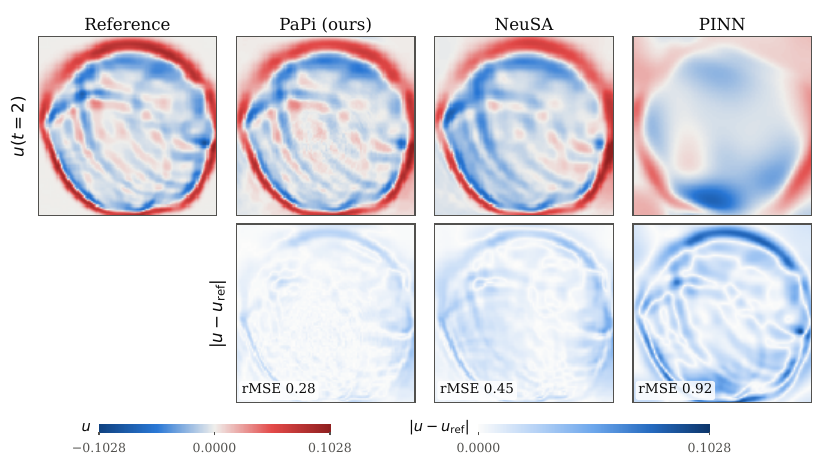}
    \caption{Wave equation in the Marmousi medium at $t=2$ (seed 0): the displacement $u$ on the region of interest $[-2,2]^2$ (top) and the absolute error against the reference (bottom), with the relative $L^2$ error at this time in each error tile. \method{} resolves the reflected and refracted fronts that NeuSA blurs and the PINN misses.}
    \label{fig:wave}
\end{figure}

\textbf{Wave equation.}
On 2D Layers \method{} matches the accuracy of NeuSA (rMSE $0.068$ against $0.069$) while training in 12\,s instead of 508\,s. On Marmousi, the most heterogeneous medium, it is more accurate ($0.15$ against $0.27$)\footnote{Our rerun of the authors' code at the paper's configuration gives $0.27\pm0.09$, above the $0.171$ reported in NeuSA; see Appendix~\ref{app:details}.} and almost $10.0\times$ faster. The speedup is smaller here for two reasons: the learned term, which represents the whole heterogeneity $(c^2-1)\Delta u$ with speeds between $0.57$ and $2.18$, is not weak, so six sweeps per step are needed; and the RK4 rollout is already compute-bound on a $201^2\times2$ state, so the parallel scan removes depth but not arithmetic. On 3D Layers \method{} trains $2.3\times$ faster than NeuSA but is less accurate ($0.018$ against $0.0076$), which we attribute to the coarse training grid (56 of the 111 time nodes) and to the large learned term ($\epsilon=10$ in NeuSA's setting); the PINN baselines are worse by a further factor of $1.5$--$4$ and take $7$--$20$ times longer.

\textbf{Sine-Gordon.}
This is the problem on which the integrator matters most. With the mass term carried by the linear part, \method{} reaches an rMSE of $5.6\times10^{-4}$, below NeuSA's $9.0\times10^{-4}$, in $0.3$\,s of training against $234$\,s, i.e.\ over two orders of magnitude faster, and three orders of magnitude faster than the PINN baselines, which are also one to two orders of magnitude less accurate. The optimisation loop takes less than a second because each optimiser step is two parallel sweeps over a $402$-dimensional state on 201 nodes, and the 1{,}000 steps are compiled into one program.

\textbf{2D Burgers.}
NeuSA is trained for 200 steps, as in its paper; \method{} is trained for 1{,}000 steps, which still takes a quarter of the time (30\,s against 119\,s) and halves the error (rMSE $0.028$ against $0.053$). The best PINN baseline, QRes, reaches $0.073$ in 1{,}178\,s.

\subsection{Korteweg--de Vries equation}\label{sec:kdv}

The KdV equation
\begingroup
\setlength{\abovedisplayskip}{4pt}
\setlength{\belowdisplayskip}{4pt}
\setlength{\abovedisplayshortskip}{0pt}
\setlength{\belowdisplayshortskip}{4pt}
\begin{equation}\label{eq:kdv}
    \partial_tu + 6u\,\partial_xu + \partial_{xxx}u = 0,\qquad u(0,x)=\frac{c}{2}\,\mathrm{sech}^2\!\left(\frac{\sqrt c}{2}(x-x_0)\right),
\end{equation}
\endgroup
on $[-10,10]\times[0,1]$ with periodic boundary conditions, $c=4$ and $x_0=-6$, has the one-soliton solution $u(t,x)=u(0,x-ct)$, which we use as reference. In a Fourier basis the linear term is the dispersion $-\partial_{xxx}$, whose eigenvalues $\pm ik^3$ grow with the cube of the wavenumber; the spectral ODE is therefore stiff, leading to NeuSA's collapse~\citep[App.~E]{bizzi-neusa}, as explicit RK4 is stable only for $\Delta t\,|k|_{\max}^3\lesssim 2.8$.
\method{} treats the dispersion exactly through $\Phi_s^t=e^{ik^3(t-s)}$, so the grid only has to resolve the smooth soliton and the learned correction. We use 81 modes for both methods so that they can be compared, $\epsilon=0.1$, and train \method{} for 10{,}000 steps with a slower learning-rate decay; the converged Picard solution requires 32 sweeps at inference (Appendix~\ref{app:sweeps}).

\method{} reaches an rMSE of $1.2\times10^{-3}$ in $3.2$\,s. NeuSA, which must take 1{,}001 sequential RK4 steps per optimiser step, takes 1{,}322\,s for 1{,}000 steps and an rMSE of $3.6\times10^{-3}$; the PINN baselines lie between $6\times10^{-3}$ and $1.5\times10^{-2}$ after 900--1{,}200\,s. At 101 modes, where RK4 diverges, \method{} trains in $3.4$\,s to an rMSE of $1.4\times10^{-3}$.

\subsection{Inverse problem: coefficient identification in Burgers' equation}\label{sec:inverse}

Since the Picard--Duhamel rollout is differentiable with respect to the linear operator as well as to the network, \method{} applies without change to inverse problems in which coefficients of the PDE are unknown. As a test of its ability to identify such coefficients from sparse data we take the parameter-identification problem for Burgers' equation as in \citet{raissi2019physics}, given by:
\begin{equation}\label{eq:inverse}
    \partial_tu + \lambda_1 u\,\partial_xu = \lambda_2\,\partial_{xx}u,\qquad u(0,x)=-\sin(\pi x),
\end{equation}
on $[-1,1]\times[0,1]$ with periodic boundary conditions, where $\lambda_1$ and $\lambda_2$ are to be learned and the true values are $(\lambda_1,\lambda_2)=(1,\,0.1)$. The model sees two snapshots of the solution: the initial condition, which it satisfies by construction, and the final state $u(1,x)$ on the 199-node spatial grid, either clean or corrupted with Gaussian noise of standard deviation equal to 1\% of that of the data. Both coefficients are trained jointly with the network by Adam. Since the outcome depends on where the coefficients start, the experiment is repeated for ten random initial guesses $(\lambda_1^0,\lambda_2^0)$ drawn uniformly from $[0.1,2]\times[0.01,0.2]$, one seed each, and the results are averaged.

As with KdV, the problem is stiff: $\lambda_2\,k_{\max}^2\approx10^4$ at the true viscosity, so on the data grid an explicit RK4 rollout is unstable before training even begins (it would need a time step ten to twenty times smaller), which is why sequential NeuSA does not appear in Table~\ref{tab:inverse}; the exact propagator is unaffected. 
The PINN, QRes and FLS baselines treat $(\lambda_1,\lambda_2)$ as trainable scalars in the residual and are trained for 5{,}000 steps. We report \method{} at a budget of 500 steps and, since it is cheap, at 5{,}000 steps in double precision.

Table~\ref{tab:inverse} reports relative solution errors on $[0,1]$, relative coefficient errors and training time. Without temporal causality, the MLP baselines struggle to reconstruct the solution and recover coefficients from two widely separated snapshots. QRes performs best among them, with errors of 20\% for $\lambda_1$ and 4\% for $\lambda_2$. At 500 steps, \method{} reaches 5\% and 1\%, respectively, in under a second, over two orders of magnitude faster. At 5{,}000 steps with clean data, both coefficient errors reach $10^{-4}$ in about a minute; with 1\% noise, they settle at $4\times10^{-3}$ for $\lambda_1$ and $2\times10^{-3}$ for $\lambda_2$.

Evolving the identified equation with the reference solver on $(1,2]$, beyond the observed interval, gives relative solution errors of $1.0\times10^{-2}$ after 500 training steps and $5\times10^{-5}$ (clean) or $2\times10^{-3}$ (noisy) after 5{,}000 steps. As in NeuSA, this extrapolation comes from the architecture's causal structure.

\begin{table}[t]
\centering
\caption{Inverse Burgers' problem: mean $\pm$ std over the ten starting pairs of the relative $L^2$ error on $[0,1]$, the relative errors of the identified coefficients and the training time per run. The observation is the final frame only; noise is relative to the standard deviation of the data.}
\label{tab:inverse}
\small
\begin{tabular}{l l r r r r}
\toprule
Noise & Model & rMSE & $|\lambda_1-1|$ & $|\lambda_2-0.1|/0.1$ & TT (s)\\
\midrule
0\%    & PINN (5{,}000 steps) & 9.9e-2 $\pm$ 4.1e-2 & 4.9e-1 $\pm$ 2.0e-1 & 8.2e-2 $\pm$ 3.5e-2 & 419\\
     & QRes (5{,}000 steps) & 4.0e-2 $\pm$ 2.1e-2 & 2.0e-1 $\pm$ 1.0e-1 & 3.7e-2 $\pm$ 1.9e-2 & 561\\
     & FLS  (5{,}000 steps) & 9.9e-2 $\pm$ 4.1e-2 & 4.9e-1 $\pm$ 2.0e-1 & 8.1e-2 $\pm$ 3.5e-2 & 411\\
     & \method{} (500 steps) & 1.0e-2 $\pm$ 8.4e-3 & 4.7e-2 $\pm$ 4.1e-2 & 9.4e-3 $\pm$ 8.2e-3 & \textbf{0.6}\\
     & \method{} (5{,}000 steps, float64) & \textbf{3.8e-5} $\pm$ 1.2e-5 & \textbf{1.1e-4} $\pm$ 5.3e-5 & \textbf{3.6e-5} $\pm$ 2.2e-5 & 66\\
\midrule
1\% & PINN (5{,}000 steps) & 9.8e-2 $\pm$ 4.1e-2 & 4.9e-1 $\pm$ 2.0e-1 & 8.0e-2 $\pm$ 3.5e-2 & 432\\
     & QRes (5{,}000 steps) & 4.0e-2 $\pm$ 2.2e-2 & 1.9e-1 $\pm$ 1.0e-1 & 3.6e-2 $\pm$ 1.9e-2 & 565\\
     & FLS  (5{,}000 steps) & 9.8e-2 $\pm$ 4.2e-2 & 4.9e-1 $\pm$ 2.0e-1 & 8.0e-2 $\pm$ 3.5e-2 & 406\\
     & \method{} (500 steps) & 9.8e-3 $\pm$ 8.5e-3 & 4.7e-2 $\pm$ 4.1e-2 & 9.4e-3 $\pm$ 8.3e-3 & \textbf{0.7}\\
     & \method{} (5{,}000 steps, float64) & \textbf{1.4e-3} $\pm$ 5.0e-4 & \textbf{4.2e-3} $\pm$ 2.7e-3 & \textbf{2.0e-3} $\pm$ 1.4e-3 & 65\\
\bottomrule
\end{tabular}
\end{table}

\subsection{Discussion}\label{sec:discussion}

Across the forward problems \method{} matches or improves NeuSA's accuracy on five of six benchmarks while reducing training time by a factor that ranges from $2.3$ (3D Layers) to about $800$ (sine-Gordon), and it reaches errors one to two orders of magnitude below the MLP-based PINNs at a small fraction of their cost. The size of the speedup follows the sequential depth that is removed: for the 1D problems, NeuSA's cost is dominated by the $4\times(\text{time steps})$ dependent kernel launches of the RK4 rollout, and replacing them with a handful of parallel sweeps brings training below a few seconds; for the 2D and 3D problems, each step is already compute-bound on the spectral state, and the gain is the ratio between the arithmetic of an RK4 rollout and that of $K$ sweeps, which is between $2$ and $10$. Part of the difference between NeuSA and \method{} on the 1D problems is due to the framework rather than to the integrator; Appendix~\ref{app:jax} quantifies this with an RK4 rollout implemented in JAX, and shows that the remaining factor of $50$--$200$ on the 1D problems, and of $2$--$10$ on the 2D and 3D problems, is due to \method{}. Finally, KdV and the inverse Burgers' problem illustrate the second advantage of the method: since the linear part is propagated exactly, the stiffness that constrains explicit time stepping in NeuSA, and forces a trade-off between spatial resolution and time step~\citep[App.~D.1]{bizzi-neusa}, does not enter \method{}, whose time grid only needs to resolve the nonlinear correction.

\section{Conclusions}

We introduce \method{}, a time-parallel integration scheme designed to accelerate the training and inference of Neuro-Spectral Architectures. By combining exact linear propagation with Picard--Duhamel iterations and parallel scans, our approach replaces sequential time stepping with parallel trajectory updates, addressing a major computational bottleneck of the original NeuSA formulation without modifying its underlying neural representation. We further established convergence of the proposed iteration and derived explicit error bounds for its numerical approximation.

Our experiments demonstrate substantial computational gains across a range of time-dependent PDEs, including forward and inverse problems. In several cases, \method{} reduces training times by orders of magnitude while maintaining competitive predictive accuracy, with sub-second training achieved on low-dimensional benchmarks. The computational benefits nevertheless depend on the number of iterations required to resolve the nonlinear dynamics. Improving the efficiency of these iterations, particularly for strongly nonlinear problems, is a natural direction for future work.

\section{AI Use Statement}
We used generative AI tools to aid and polish writing, in particular editing the paper for readability (grammar, clarity, and phrasing of text written by the authors), and to create and edit auxiliary code.

% \subsubsection*{Acknowledgments}
\section{Acknowledgments}

Arthur Bizzi is supported by UK Research and Innovation (UKRI) through the Engineering and Physical Sciences Research Council (EPSRC).
Tiago Novello is thankful for financial support from Google. João M Pereira is thankful for a start-up grant from the University of Georgia. Vitor Balestro gratefully acknowledges Centro Pi/IMPA for providing computational resources used in this work.

\bibliographystyle{iclr2027_conference}

\appendix

\section{Proof of Convergence and Error Bounds}\label{app:proofs}

We first state and prove some necessary lemmas.
\begin{lemma}\label{lemma:Gronwall}
Let $\Phi_{s}^t$ be defined as in \eqref{eq:linear_propagator}, with $s\le t$ and $\|A(t)\|_{\text{op}} \leq \alpha  \,\forall t$. Then $\|\Phi_{s}^t\|_{\text{op}} \leq e^{\alpha(t-s)}$ and \linebreak $\|\Phi_{s}^t-I\|_{\text{op}} \le e^{\alpha(t-s)} -1$, where $I$ is the identity matrix.
\end{lemma}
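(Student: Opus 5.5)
The plan is to work from the integral form of \eqref{eq:linear_propagator} and apply Gr\"onwall's inequality twice: once to $\Phi_s^t$ itself, and once to the deviation $\Phi_s^t - I$. Throughout, fix $s$ and regard $\Phi_s^t$ as a function of $t\ge s$. Integrating \eqref{eq:linear_propagator} from $s$ gives
\begin{equation*}
\Phi_s^t = I + \int_s^t A(r)\Phi_s^r\,\mathrm{d}r .
\end{equation*}

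\textbf{First bound.} Take operator norms and use submultiplicativity together with $\|A(r)\|_{\text{op}}\le\alpha$. This gives
\begin{equation*}
\phi(t):=\|\Phi_s^t\|_{\text{op}} \le 1+\alpha\int_s^t \phi(r)\,\mathrm{d}r .
\end{equation*}
The function $\phi$ is continuous because $\Phi_s^t$ is. The integral form of Gr\"onwall's inequality then yields $\phi(t)\le e^{\alpha(t-s)}$.

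An alternative route avoids Gr\"onwall. Iterate the integral equation to get the Peano--Baker (Dyson) series for $\Phi_s^t$. The $n$-th term is bounded in norm by $\alpha^n(t-s)^n/n!$, so the series sums to at most $e^{\alpha(t-s)}$.

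\textbf{Second bound.} Write $\Psi(t):=\Phi_s^t-I$. From the same integral equation,
\begin{equation*}
\Psi(t)=\int_s^t A(r)\bigl(I+\Psi(r)\bigr)\,\mathrm{d}r .
\end{equation*}
Setting $\psi(t):=\|\Psi(t)\|_{\text{op}}$, this gives
\begin{equation*}
\psi(t)\le \alpha(t-s)+\alpha\int_s^t\psi(r)\,\mathrm{d}r .
\end{equation*}

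The natural comparison function is $\chi(t)=e^{\alpha(t-s)}-1$. It satisfies the corresponding equality $\chi(t)=\alpha(t-s)+\alpha\int_s^t\chi(r)\,\mathrm{d}r$, which follows from $\chi'=\alpha(1+\chi)$ and $\chi(s)=0$. So $\chi$ solves the linear integral equation exactly.

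A standard comparison argument then gives $\psi\le\chi$. Concretely, let $\delta=\psi-\chi$. Then $\delta(t)\le\alpha\int_s^t\delta(r)\,\mathrm{d}r$, and Gr\"onwall with zero forcing gives $\delta\le0$.

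The Dyson series gives this bound even more directly. Subtracting the $n=0$ term $I$ leaves terms with $n\ge1$, and their norms sum to at most $\sum_{n\ge1}\alpha^n(t-s)^n/n! = e^{\alpha(t-s)}-1$.

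\textbf{Main obstacle.} The only delicate point is justifying the Gr\"onwall and comparison step when the forcing term is not constant; this arises in the second bound. I would sidestep it by using the series argument for both estimates. That argument only requires $A$ to be locally integrable with the uniform bound $\alpha$, which guarantees convergence of the series and its equality with $\Phi_s^t$.
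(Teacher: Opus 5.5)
Your proposal is correct. Your first bound is the paper's argument carried out on operator norms; the paper instead bounds $\|\Phi_s^t z\|$ for a fixed vector $z$ and then takes the supremum, which is equivalent.

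The second bound is where you diverge. You set up a fresh Gr\"onwall-type inequality for $\psi=\|\Phi_s^t-I\|_{\text{op}}$ with forcing $\alpha(t-s)$, and close it by comparison with $e^{\alpha(t-s)}-1$. Alternatively, you use the Dyson series. The comparison step is valid, since zero-forcing Gr\"onwall needs no sign condition on $\delta$. It also correctly avoids the weaker bound $\alpha(t-s)e^{\alpha(t-s)}$ that the simplified Gr\"onwall form would give.

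The paper needs no second Gr\"onwall step at all. From $\Phi_s^t-I=\int_s^t A(r)\Phi_s^r\,\mathrm{d}r$ and the first bound already in hand, it gets $\|\Phi_s^t-I\|_{\text{op}}\le\alpha\int_s^t e^{\alpha(r-s)}\,\mathrm{d}r=e^{\alpha(t-s)}-1$. So the ``main obstacle'' you flag never arises: the second estimate is a one-line corollary of the first.

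Your Dyson-series route is a genuinely different, Gr\"onwall-free proof. It yields both bounds at once from the termwise estimate $\alpha^n(t-s)^n/n!$. The cost is that you must justify convergence of the series and its identification with $\Phi_s^t$. The paper's route needs only the integral equation and a single application of Gr\"onwall.
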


\begin{proof}
Let $z$ be any vector, and define $Z(t)=\Phi_{s}^{t} z$. $Z(t)$ admits the integral form:
$$Z(t)=Z(s) + \int_{s}^t A(\tau) Z(\tau) d\tau.$$
Taking the norm, we obtain
\begin{align}
\nonumber\|Z(t)\|&\le \|Z(s)\| + \int_{s}^t \| A(\tau) Z(\tau)\|d\tau,\\
&\le \|Z(s)\| + \int_{s}^t \alpha \|Z(\tau)\|d\tau. \label{eq:pre_gronwall}
\end{align}
Applying Grönwall's inequality \cite[Theorem 1.2.2]{pachpatte1998inequalities} to \eqref{eq:pre_gronwall}, we obtain $\forall s,t$
\begin{align*}
\|\Phi_{s}^{t} z \| = \|Z(t)\|
&\le e^{\alpha(t-s)} \|Z(s)\| \\ &= e^{\alpha(t-s)} \|z\|
\end{align*}
Using this inequality, we obtain
\begin{align*}
\|(\Phi_{s}^{t} - I) z \| &= \|Z(t) - Z(s)\|\\
&\le \int_{s}^t \alpha \|Z(\tau)\|d\tau\\
&\le \alpha \int_{s}^t e^{\alpha(\tau-s)} \|z\|d\tau\\
&= (e^{\alpha(t-s)} - 1) \|z\|
\end{align*}
Since this holds for any vector $z$, the proof follows.
\end{proof}

\begin{lemma}\label{lemma:m_recursion_2}
	Let $q\ge 1$, $r,s\ge 0$, and suppose $\{m^{(k)}_{j}\}$ follows the recurrence
	\begin{equation}\label{eq:recurrence}
		m_{j+1}^{(k+1)} \le q m_j^{(k+1)} + r\bigl(m_{j+1}^{(k)}+q m_j^{(k)}\bigr) + s
	\end{equation}
	for all $j,k\ge 0$, with $m^{(k)}_{0}=m^{(0)}_{j}=0$. Then, for every $j\ge 1$ and $k\ge 1$
	\begin{equation}
		\label{eq:m-bound}
		m_j^{(k)}
		\le
		s q^{j-1} \frac{(1 + 2r)^{j+k - 1} - 1}{2r},
	\end{equation}
	with $\frac{(1 + 2r)^{j+k - 1} - 1}{2r}=j+k-1$ if $r=0$.
\end{lemma}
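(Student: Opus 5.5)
I will prove \eqref{eq:m-bound} by induction on $(k,j)$, using a recursive form of the right-hand side so that the inductive step reduces to a one-line inequality. For $n\ge 0$ put
$$S_n:=\sum_{i=0}^{n-1}(1+2r)^i=\frac{(1+2r)^n-1}{2r},$$
which equals $n$ when $r=0$, consistent with the convention in the statement. The only property I need is the identity
$$S_{n+1}=1+(1+2r)S_n,$$
which holds for every $r\ge 0$, together with $S_n\ge 0$ and monotonicity $S_{n-1}\le S_n$. For $j\ge 1$ and $k\ge 0$ define the candidate bound $B_j^{(k)}:=s\,q^{j-1}S_{j+k-1}$.

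The claim I prove is $m_j^{(k)}\le B_j^{(k)}$ for all $j\ge1$ and $k\ge0$; this contains \eqref{eq:m-bound}. The case $k=0$ holds trivially, since $m_j^{(0)}=0\le B_j^{(0)}$. I then induct on $k$ and, for fixed $k+1$, on $j$. Since $q,r\ge0$, I may replace every $m$ on the right of \eqref{eq:recurrence} by a known upper bound, so no sign information on $m$ is needed.

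\textbf{Base in $j$.} Take $j=0$ in \eqref{eq:recurrence} and use $m_0^{(\cdot)}=0$. This gives
$$m_1^{(k+1)}\le r\,m_1^{(k)}+s\le r\,sS_k+s\le s\bigl(1+(1+2r)S_k\bigr)=sS_{k+1}=B_1^{(k+1)}.$$

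\textbf{Inductive step.} For $j\ge1$, assume the bound for $m_j^{(k+1)}$, $m_{j+1}^{(k)}$ and $m_j^{(k)}$. The three recurrence terms are bounded as follows:
\begin{align*}
q\,B_j^{(k+1)}&=s q^j S_{j+k},\\
r\,B_{j+1}^{(k)}&=r s q^j S_{j+k},\\
r q\,B_j^{(k)}&=r s q^j S_{j+k-1}\le r s q^j S_{j+k}.
\end{align*}
Summing these and adding $s$ gives
$$m_{j+1}^{(k+1)}\le s q^j (1+2r) S_{j+k}+s.$$
Since $q\ge1$, we have $s\le s q^j$, so the right-hand side is at most
$$s q^j\bigl((1+2r)S_{j+k}+1\bigr)=s q^jS_{j+k+1}=B_{j+1}^{(k+1)}.$$
This closes the induction.

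The only delicate point is bookkeeping rather than analysis. The two boundary conventions must be absorbed: $j=0$ is handled separately because of the factor $q^{j-1}$, and $k=0$ is covered by $B_j^{(0)}\ge0$. Beyond that, the hypothesis $q\ge1$ is used exactly once, to absorb the additive $s$ into $s q^j$, and the monotonicity $S_{j+k-1}\le S_{j+k}$ lets the cross term $rqm_j^{(k)}$ merge with $r\,m_{j+1}^{(k)}$ into the single factor $1+2r$.
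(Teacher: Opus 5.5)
Your proof is correct, and it takes a different and shorter route than the paper. You run a lexicographic induction on $(k,j)$ whose invariant is the target bound itself, $B_j^{(k)}=sq^{j-1}S_{j+k-1}$:
\begin{itemize}
\item the factor $q^{j-1}$ in $B_j^{(k)}$ absorbs the $q\,m_j^{(k+1)}$ term automatically;
\item the identity $S_{n+1}=1+(1+2r)S_n$, together with $q\ge 1$, absorbs the additive $s$;
\item monotonicity of $S_n$ merges the two cross terms into the single factor $1+2r$.
\end{itemize}

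The paper instead rescales $a_j^{(k)}=q^{1-j}m_j^{(k)}$ and telescopes the recurrence in $j$ to get $a_j^{(k+1)}\le r\bigl(a_j^{(k)}+2\sum_{i=1}^{j-1}a_i^{(k)}\bigr)+js$. It then inducts only on $k$, proving the intermediate estimate $m_j^{(k)}\le sq^{j-1}\sum_{\ell=0}^{k-1}(2r)^\ell\binom{j+\ell}{\ell+1}$ via the hockey-stick and Pascal identities. Only at the end does it coarsen this with $\binom{j+\ell}{\ell+1}\le\binom{j+k-1}{\ell+1}$ and the binomial theorem.

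What the paper's route buys is that sharper intermediate bound. For instance, at $r=0$ it gives $jsq^{j-1}$, independent of $k$, whereas the stated bound (and hence yours) gives $(j+k-1)sq^{j-1}$. What your route buys:
\begin{itemize}
\item brevity, with no combinatorial identities;
\item a uniform treatment of $r=0$ through $S_n=\sum_{i<n}(1+2r)^i$, without the separate convention;
\item the case $k=0$ for free.
\end{itemize}
Since Theorem~\ref{thm:convergence_quadrature} only uses the coarser bound, nothing is lost for the paper's purposes.
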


\begin{proof}
	Define the rescaled sequence
	\[
	a_j^{(k)}:=q^{1-j}m_j^{(k)}.
	\]
	Dividing \eqref{eq:recurrence} by $q^{j}$ gives
	\[
	a_{j+1}^{(k+1)}
	\le
	a_j^{(k+1)}
	+r\bigl(a_{j+1}^{(k)}+a_j^{(k)}\bigr)
	+s q^{-j}.
	\]
	Since $q\ge 1$, we have $q^{-j}\le 1$, and therefore
	\begin{equation}
		\label{eq:scaled-one-step}
		a_{j+1}^{(k+1)}
		\le
		a_j^{(k+1)}
		+r\bigl(a_{j+1}^{(k)}+a_j^{(k)}\bigr)
		+s.
	\end{equation}
	Summing \eqref{eq:scaled-one-step} over $j$ index from $0$ to
	$j-1$ and using $a_0^{(k+1)}=a_0^{(k)}=0$, we obtain
	\begin{align}
		a_j^{(k+1)}
		&\le
		r\sum_{i=0}^{j-1}
		\bigl(a_{i+1}^{(k)}+a_i^{(k)}\bigr)+js \notag\\
		&=
		r\left(a_j^{(k)}+2\sum_{i=1}^{j-1}a_i^{(k)}\right)+js,
		\label{eq:summed-recurrence}
	\end{align}
	We now prove by induction on $k$ that
	\begin{equation}
		\label{eq:scaled-finite-bound}
		a_j^{(k+1)}
		\le
		s\sum_{\ell=0}^{k}
		(2r)^\ell\binom{j+\ell}{\ell+1}
	\end{equation}
	for every $j\ge 1$.
	
	For $k=0$, the condition $a_j^{(0)}=0$ and
	\eqref{eq:summed-recurrence} imply $a_j^{(1)}\le js
	=s\binom{j}{1}$, and for $k>0$, substituting \eqref{eq:scaled-finite-bound}
	into \eqref{eq:summed-recurrence} yields
	\begin{align*}
		a_j^{(k+1)}
		\le{}& js
		+rs\sum_{\ell=0}^{k-1}(2r)^\ell
		\left[
		\binom{j+\ell}{\ell+1}
		+2\sum_{i=1}^{j-1}\binom{i+\ell}{\ell+1}
		\right].
	\end{align*}
	The hockey-stick identity gives
	\[
	\sum_{i=1}^{j-1}\binom{i+\ell}{\ell+1}
	=\binom{j+\ell}{\ell+2}.
	\]
	Moreover, by nonnegativity and Pascal's identity,
	\begin{align*}
		\binom{j+\ell}{\ell+1}
		+2\binom{j+\ell}{\ell+2}
		&\le
		2\left[
		\binom{j+\ell}{\ell+1}
		+\binom{j+\ell}{\ell+2}
		\right]\\
		&=2\binom{j+\ell+1}{\ell+2}.
	\end{align*}
	Consequently,
	\begin{align*}
		a_j^{(k+1)}
		&\le
		js+s\sum_{\ell=0}^{k-1}
		(2r)^{\ell+1}\binom{j+\ell+1}{\ell+2}\\
		&=
		js+s\sum_{\ell=1}^{k}
		(2r)^{\ell}\binom{j+\ell}{\ell+1}\\
		&=
		s\sum_{\ell=0}^{k}
		(2r)^\ell\binom{j+\ell}{\ell+1},
	\end{align*}
	This completes the induction and since $m_j^{(k+1)}=q^{j-1} a_j^{(k+1)}$,
	$$m_j^{(k)} \le s q^{j-1}\sum_{\ell=0}^{k-1}
	(2r)^\ell\binom{j+\ell}{\ell+1}.$$
	Equation \eqref{eq:m-bound} follows
	\begin{align*}
		m_j^{(k)} &\le s q^{j-1}\sum_{\ell=0}^{k-1}
		(2r)^\ell\binom{j+\ell}{\ell+1}\\
		&\le s q^{j-1}\sum_{\ell=0}^{k-1}
		(2r)^\ell\binom{j+k-1}{\ell+1}\\
		&\le s q^{j-1}\sum_{\ell=0}^{j+k-2}
		(2r)^\ell\binom{j+k-1}{\ell+1}\\
		&= s q^{j-1} \frac{(1 + 2r)^{j+k - 1} - 1}{2r}
	\end{align*}.
\end{proof}

\begin{proof}[Proof of Theorem~\ref{thm:convergence}]
	The proof is inspired by the classical convergence argument for Picard iteration due to Lindelöf \cite{lindelof1894application}. %
	Let $Y^{(k)} = X^{(k+1)}-X^{(k)}$. For $k=0$, we have the following estimate:
	\begin{align}
		\nonumber\|Y^{(0)}\| &= \|X^{(1)}-X^{(0)}\|\leq \varepsilon \int_0^t \left\|\Phi_s^t (F(X^{(0)}(s))\right\|\, \d s,\\
		\nonumber&\le \varepsilon \int_0^t e^{\alpha(t-s)}\left\|F(X^{(0)}(s))\right\|,\\
		\nonumber&\le \varepsilon \int_0^t e^{\alpha(t-s)}\left(\|F(X^{(0)}(s))-F(X_0)\| + \|F(X_0)\|\right),\\
		\nonumber&\le \varepsilon \int_0^t e^{\alpha(t-s)}\left(L \|X^{(0)}(s)-X_0\| + \|F(X_0)\|\right),\\
		\nonumber&\le \varepsilon \int_0^t e^{\alpha(t-s)}\left(L (e^{\alpha s} - 1) \|X_0\| + \|F(X_0)\|\right),\\
		\nonumber&= \frac{\varepsilon}{\alpha} \left(((t\alpha - 1) e^{\alpha t} + 1) L \|X_0\| + (e^{\alpha t} - 1)\|F(X_0)\|\right),\\
		\label{eq:ind_base_case} &\le \varepsilon (L \|X_0\| + \|F(X_0)\|) t e^{\alpha t},
	\end{align}
	where we use Lemma~\ref{lemma:Gronwall} in lines 1 and 5. For $k>0$, we obtain the following recurrence:
	\begin{align*}
		\|Y^{(k+1)}(t)\| &\leq \varepsilon \int_0^t\left\| \Phi_s^t (F(X^{(k+1)}(s))  - F(X^{(k)}(s))\right\|\, \d s
		\\&\leq \varepsilon \int_0^t e^{\alpha(t-s)} \left\|(F(X^{(k+1)}(s))  - F(X^{(k)}(s))\right|\, \d s
		\\&\leq \varepsilon L \int_0^t e^{\alpha(t-s)} \left\|X^{(k+1)}(s) - X^{(k)}(s) \right\|\, \d s
		\\&\leq \varepsilon L\int_0^t e^{\alpha(t-s)} \left\|Y^{(k)}(s)\right\|\, \d s
	\end{align*}
	Letting $C=L\|X_0\| + \|F(X_0)\|$, we now show by induction over $k$ that:
	\begin{equation}\label{eq:picard_estimate}
		\|Y^{(k)}(t)\| \leq C \frac{L^{k}(\varepsilon  t)^{k+1}}{(k+1)!}e^{\alpha t}
	\end{equation}
	The case $k=0$ follows from $\eqref{eq:ind_base_case}$, while for $k+1$
	\begin{align*}
		\|Y^{(k+1)}(t)\| &\leq \varepsilon L\int_0^t e^{\alpha(t-s)} \left\|Y^{(k)}(s)\right\|\, \d s\\
		&\leq C \frac{L^{k+1}\varepsilon^{k+2}}{(k+1)!} \int_0^t e^{\alpha(t-s)}e^{\alpha s} s^{k+1} \, \d s\\
		&= C \frac{ L^{k+1}(\varepsilon t)^{k+2}}{(k+2)!}e^{\alpha t}
	\end{align*}
	We have $\sum_{k=0}^\infty |Y^{(k)}|<\infty$, which implies that $X^{(k)}$ is a Cauchy sequence, hence it converges, and we have
	\begin{align*}
		\|X^{(k)}(t) - X(t)\|&\le \sum_{i=k}^{\infty}\|X^{(i+1)}(t)-X^{(i)}(t)\|= \sum_{i=k}^{\infty}\|Y^{(i)}(t)\|\\
		&\le \frac{C}{L} e^{\alpha t} \sum_{i=k}^{\infty}  \frac{(\varepsilon L t)^{i+1}}{(i+1)!}.
	\end{align*}
	Finally, the error bound follows from $(k+i)!\ge k! i !$:
	\[\sum_{i=k}^{\infty}  \frac{(\varepsilon L t)^{i+1}}{(i+1)!} = \frac{(\varepsilon L t)^{k+1}}{(k+1)!} \sum_{i=0}^{\infty} \frac{(k+1)!}{(i+k+1)!}(\varepsilon L t)^{i} \le \frac{(\varepsilon L t)^{k+1}}{(k+1)!} \sum_{i=0}^{\infty} \frac{1}{i!}(\varepsilon L t)^{i} = \frac{(\varepsilon L t)^{k+1}}{(k+1)!}e^{\varepsilon L t}.\qedhere\]
\end{proof}

\begin{proof}[Proof of Theorem~\ref{thm:convergence_quadrature}]
	Let $m_{j}^{(k)} = \|\bar{X}_{j}^{(k)} - X^{(k)}(t_j)\|$. Applying the trapezoidal bounds and the triangle inequality we obtain
	\begin{align*}
	m_{j+1}^{(k+1)} &= \|\bar{X}_{j+1}^{(k+1)} - X^{(k+1)}(t_{j+1})\| \\
	&\le \underbrace{\|P_j \bar{X}_{j}^{(k+1)} - P_j X^{(k+1)}(t_j)\|}\limits_{(1)} + \underbrace{\|b_{j} -  \tilde b_{j}\|}\limits_{(2)} + \underbrace{\|\Delta X^{(k+1)}(t_j) - \tilde b_{j}\|}\limits_{(3)},
	\end{align*}
	where 
	\begin{align*}
	b_{j}&=\frac{\varepsilon h}{2} (P_j g_{j}^{(k)} + g_{j+1}^{(k)}),\\
	g_{j}^{(k)}&=F(\bar{X}_{j}^{(k)}),\\
	\tilde b_{j}&=\frac{\varepsilon h}{2} (P_j \tilde g_{j}^{(k)} + \tilde g_{j+1}^{(k)}),\\
	\tilde g_{j}^{(k)}&=F(X^{(k)}(t_{j})),\\
	\Delta X^{(k+1)}(t_j)&= X^{(k+1)}(t_{j+1}) - P_j X^{(k+1)}(t_j).
	\end{align*}
	Using $\|P_j\|_{op} \le e^{\alpha h}$, we have 
	$$\|P_j \bar{X}_{j}^{(k+1)} - P_j X^{(k+1)}(t_j)\|\le e^{\alpha h}\|\bar{X}_{j}^{(k+1)} - X^{(k+1)}(t_j)\| = e^{\alpha h}m_{j}^{(k+1)}$$
	Similarly, we have
	\begin{align*}
	\|b_{j} -  \tilde b_{j}\| &\le \frac{\varepsilon h}{2}\left( e^{\alpha h}\|g_{j}^{(k)} - \tilde g_{j}^{(k)}\| + \|g_{j+1}^{(k)} - \tilde g_{j+1}^{(k)}\|\right)\\
	&\le \frac{\varepsilon h}{2}\left( e^{\alpha h}\|\bar{X}_{j}^{(k)} -  X^{(k)}(t_{j})\| + \|\bar{X}_{j+1}^{(k)} -  X^{(k)}(t_{j+1})\|\right)\\
	&= \frac{\varepsilon L h}2 (e^{\alpha h} m_{j}^{(k)} + m_{j+1}^{(k)}),
	\end{align*}
	Finally, \eqref{eq:duhamel-scheme} and the truncation error of the trapezoidal method implies 
	\begin{align*}
		\|\Delta X^{(k+1)}(t_j) - \tilde b_{j}\|&=\left\|\varepsilon\int_{t_j}^{t_{j+1}}
		\Phi_s^{t_{j+1}}F(X^{(k)}(s))\,\mathrm{d}s - \tilde b_{j}\right\|\\
		&=\varepsilon\left\|\int_{t_j}^{t_{j+1}}
		\Phi_s^{t_{j+1}}F(X^{(k)}(s))\,\mathrm{d}s - \frac{\varepsilon h}{2} (P_j \tilde g_{j}^{(k)} + \tilde g_{j+1}^{(k)})\right\|\\	
		&\le \frac{\varepsilon h^3 C_2}{12}
	\end{align*}
	We obtain
	\begin{align*}
		m_{j+1}^{(k+1)} \le e^{\alpha h}m_{j}^{(k+1)} + \frac{\varepsilon L h}2 (e^{\alpha h} m_{j}^{(k)} + m_{j+1}^{(k)})  + \frac{\varepsilon h^3 C_2}{12}
	\end{align*}
	Applying Lemma~\ref{lemma:m_recursion_2} we obtain 
	$$m_{j}^{(k)} \le \frac{\varepsilon h^3 C_2}{12} e^{\alpha t_{j-1}} \frac{(1 + \varepsilon L h)^{j+k - 1} - 1}{\varepsilon L h}$$
	Finally, applying Theorem~\ref{thm:convergence},
	\begin{align*}
		\|\bar{X}_{j}^{(k)} - X(t_j)\|&\le \|X^{(k)}(t_j) - X(t_j)\|+ \|\bar{X}_{j}^{(k)} - X^{(k)}(t_j)\|,\\
		&\le C  \frac{L^k(\varepsilon t_j)^{k+1}}{(k+1)!}e^{(\alpha +\varepsilon L)t_j} + \frac{\varepsilon h^3 C_2}{12} e^{\alpha t_{j-1}} \frac{(1 + \varepsilon L h)^{j+k - 1} - 1}{\varepsilon L h}.\qedhere
	\end{align*}

\end{proof}

\section{Benchmark problems and experimental details}\label{app:details}

\subsection{Problems}\label{app:problems}

The five forward benchmarks are those of NeuSA~\citep{bizzi-neusa}, with the same discretisations, networks and reference solutions; we restate them here for completeness, together with the choices specific to \method{} on each. KdV and the inverse problem are defined in Sections~\ref{sec:kdv} and~\ref{sec:inverse}.

\paragraph{Wave equation: 2D Layers, 3D Layers and Marmousi.}
The acoustic wave equation in a heterogeneous medium,
\begin{equation}\label{eq:wave}
    \partial_{tt}u = c(\x)^2\,\Delta u,\qquad u(0,\x)=\exp\!\left(-\frac{|\x-\x_0|^2}{2\sigma^2}\right),\qquad \partial_t u(0,\x)=0,
\end{equation}
in three settings: two dimensions with three horizontal layers of speeds $1$, $1.25$ and $1.5$ (2D Layers, $\sigma=0.1$), two dimensions with the Marmousi velocity model~\citep{versteeg1994marmousi} ($c$ between $0.57$ and $2.18$), and three dimensions with two layers of speeds $1$ and $0.5$ (3D Layers, $\sigma=0.15$). As in NeuSA, the 2D problems are posed on $[-4,4]^2$, twice the region of interest $[-2,2]^2$, in a cosine basis, so that reflections from the artificial boundary do not reach the scored region within $T=2$; they use $201^2$ modes and 201 time steps. The 3D problem is posed on $[-1.5,1.5]^3$ with $101^3$ modes and 111 time steps up to $T=1$.\footnote{These are the discretisations of the authors' released scripts, which reproduce the numbers of their paper (Appendix~\ref{app:jax}), except for Marmousi: there the released script uses $401^2$ modes and 5{,}000 steps, whereas the paper reports $201^2$ modes and 2{,}000 steps, and every row of Table~\ref{tab:main} uses the paper's $201^2$ configuration.} The vector field is NeuSA's first-order system in $(\hat u,\hat v)$, $v=\partial_tu$: the linear part is the homogeneous wave operator with unit speed, $A=\begin{psmallmatrix}0&I\\ \Delta&0\end{psmallmatrix}$, and the network is left with the whole effect of the heterogeneity, $(c^2-1)\Delta u$, with $\epsilon=1$ (2D Layers), $2$ (Marmousi) and $10$ (3D Layers). Two settings are specific to \method{}. On 2D and 3D Layers the training rollout uses the exponential trapezoidal rule on a coarser grid than the evaluation one (101 of the 201 nodes, and 56 of the 111). On Marmousi, whose learned term is not weak, two warm-started sweeps per step do not track the trajectory; we use six, with 2{,}500 Adam steps under a warm-up--cosine learning-rate schedule (linear warm-up to $10^{-2}$ over 150 steps, cosine decay to $10^{-4}$) and the residual imposed on a window 15 grid points wider than the scored one, as in the authors' script. The references are the authors' data: eighth-order finite differences for the 2D problems and their pseudo-spectral RK4 generator for the 3D one.

\paragraph{Sine-Gordon.}
NeuSA's 1D sine-Gordon problem,
\begin{equation}\label{eq:sg}
    \partial_{tt}u = \partial_{xx}u - 10\sin u,\qquad u(0,x)=\frac{1}{\sqrt{2\pi}\sigma}\exp\!\left(-\frac{x^2}{2\sigma^2}\right),\qquad \partial_tu(0,x)=0,
\end{equation}
on $[-4,4]\times[0,3]$ with homogeneous Dirichlet conditions and $\sigma=0.1$, in a sine basis with 201 modes and 201 time steps, trained for 1{,}000 steps with $\epsilon=0.1$. NeuSA initialises around the Laplacian. For \method{} we move the mass term into the linear part, so that $A$ is the Klein--Gordon operator $\partial_{xx}-10$, whose mode-$k$ block has eigenvalues $\pm i\sqrt{k^2\pi^2/L^2+10}$, and the network is left with $10(u-\sin u)$, which is cubic at small amplitude (Section~\ref{sec:training}). The reference is the authors' pseudo-spectral RK4 solution.

\paragraph{2D Burgers.}
The coupled system
\begin{equation}\label{eq:burgers2d}
    \partial_t\mathbf{u} = -(\mathbf{u}\cdot\nabla)\mathbf{u} + \nu\Delta\mathbf{u},\qquad \mathbf{u}(0,\x)=\big(\sin(\pi x_1)\sin(\pi x_2),\,\cos(\pi x_1)\cos(\pi x_2)\big),
\end{equation}
with $\mathbf{u}=(u,v)$, $(\x,t)\in[0,4]^2\times[0,1]$, $\nu=0.01$ and periodic boundary conditions, in a Fourier basis with $201^2$ modes and 201 time steps. The linear part is the heat operator $\nu\Delta$, as in NeuSA's initialisation, and $\epsilon=0.1$. NeuSA is trained for 200 steps, as in its paper; \method{} for 1{,}000, since the loss was still decreasing at that budget and the extra steps are cheap. The reference is the authors' pseudo-spectral RK4 solution.

\paragraph{KdV and inverse Burgers.}
For KdV (Section~\ref{sec:kdv}) the linear part $-\partial_{xxx}$ couples each $(\cos_k,\sin_k)$ pair of the real Fourier basis through a $2\times2$ rotation generator of frequency $k^3$, which is exponentiated in closed form; both NeuSA and \method{} use 81 modes and 1{,}001 time steps, the finest grid on which RK4 is stable at $\Delta t=10^{-3}$ ($\Delta t\,k_{\max}^3\approx2.0$; at 101 modes it is $3.9$ and RK4 returns NaN), $\epsilon=0.1$, and \method{} is trained for 10{,}000 steps with decay $0.9995$ per step instead of $0.999$. The reference is a pseudo-spectral RK4 solution on the model's grid, which agrees with the exact soliton to $4\times10^{-4}$. For the inverse problem (Section~\ref{sec:inverse}) the linear part is $\lambda_2\Delta$ with the current value of $\lambda_2$; the reference is the clean solution with the true coefficients from NETD's pseudo-spectral solver on an 801-node grid, and the identified equation is rolled out with the same solver on $(1,2]$.

\subsection{Settings}\label{app:settings}

Table~\ref{tab:settings} lists the discretisation and training settings of every problem. Basis, modes, time grid, network, $\epsilon$, learning rate and, unless stated, number of steps are those of NeuSA's released scripts; where the paper text and the scripts differ, we used the configuration that reproduces the numbers of the paper's tables.

\begin{table}[h]
\centering
\caption{Settings per problem. $n_t$ is the number of time nodes on $[0,T]$; ``train nodes'' the coarser grid the training rollout uses with the exponential trapezoidal rule (--: full grid); $K$ the warm-started sweeps per optimiser step; ``deploy'' the number of Picard sweeps at inference. Steps are \method{}'s, with NeuSA's in parentheses when different.}
\label{tab:settings}
\scriptsize
\setlength{\tabcolsep}{3pt}
\resizebox{\textwidth}{!}{%
\begin{tabular}{l l r r r r r r l r r r}
\toprule
Problem & Basis & Modes & $n_t$ ($T$) & $\epsilon$ & Steps & lr & $K$ & Quadrature & Train nodes & Deploy & Params\\
\midrule
2D Layers & cosine & $201^2$ & 201 (2) & 1 & 2{,}000 & $10^{-2}$ & 2 & exp.\ trapezoid & 101 & 8 & 3.2e5\\
Marmousi & cosine & $201^2$ & 201 (2) & 2 & 2{,}500 (2{,}000) & cosine & 6 & trapezoid & -- & 8 & 3.2e5\\
3D Layers & cosine & $101^3$ & 111 (1) & 10 & 4{,}000 (2{,}000) & $10^{-2}$ & 2 & exp.\ trapezoid & 56 & 8 & 6.1e4\\
2D Burgers & Fourier & $201^2$ & 201 (1) & 0.1 & 1{,}000 (200) & $5\cdot10^{-3}$ & 2 & trapezoid & -- & 8 & 8.1e5\\
Sine-Gordon & sine & 201 & 201 (3) & 0.1 & 1{,}000 & $2\cdot10^{-2}$ & 2 & trapezoid & -- & 8 & 9.7e5\\
KdV & Fourier & 81 & 1{,}001 (1) & 0.1 & 10{,}000 (1{,}000) & $10^{-2}$ & 2 & trapezoid & -- & 32 & 1.6e5\\
Inverse Burgers & Fourier & 199 & 401 (1) & 0.1 & 500 / 5{,}000 & $10^{-2}$ & 8 (cold) & exp.\ trapezoid & -- & 8 & 9.5e5\\
\bottomrule
\end{tabular}}
\end{table}

\paragraph{Hardware and software.}
All runs, ours and the baselines, were executed on machines with an NVIDIA GeForce RTX~5090 (32\,GB, 575\,W power limit) and an Intel i9-13900K. \method{} is implemented in JAX~0.11 with Equinox~\citep{kidger2021equinox} and Optax~\citep{deepmind2020optax}, CUDA~13, with the default TF32 matrix multiplications; the baselines and NeuSA run the authors' PyTorch~2.9.1 code (CUDA~12.8) with torchdyn~1.0.6, as pinned by their lock file. Training time is the wall-clock time of the optimiser loop; for JAX the ahead-of-time compilation (5--20\,s per run) is excluded. Every run records the host, GPU, library versions and commit, and no partial runs are included.

\paragraph{Baselines.}
PINN, QRes and FLS use the authors' scripts and the settings of the NeuSA paper: learning rate $10^{-3}$, initial-condition weight $10^3$ (2D Burgers: $10^2$, boundary weight 1), 20{,}000 steps with 10{,}000 residual, 1{,}000 initial and 500 boundary points resampled at every step for the 2D problems (3D Layers: 70{,}000 / 10{,}000 / 2{,}000), and 10{,}000 steps on the full $201\times201$ space-time grid for the 1D problems. NeuSA runs the authors' scripts unchanged; for KdV, which the authors do not provide, we forked their 1D Burgers script, keeping model, trainer and integrator, and verified that with the network switched off the fork and our implementation agree on the initial coefficients to $10^{-15}$, on the RK4 trajectory to $10^{-6}$ and on the loss to seven digits. Marmousi NeuSA is run at the paper's configuration ($201^2$ modes, 2{,}000 steps, residual window and subsampling as in the authors' NETD script); it reaches an rMSE of $0.27\pm0.09$ against the $0.171\pm0.035$ reported in the paper, whereas on the other four benchmarks the released code reproduces the paper's tables (Table~\ref{tab:jax}). For the inverse problem the PINN, QRes and FLS baselines are those of NETD's protocol, with $(\lambda_1,\lambda_2)$ as trainable scalars, 5{,}000 steps, data weight 10 and initial-condition weight 10.

\section{Full results}\label{app:tables}

Table~\ref{tab:full} gives the mean and standard deviation over five seeds of rMSE, rMAE and training time for the six benchmarks of Table~\ref{tab:main}. Figure~\ref{fig:all} plots training time against rMSE for every method on each benchmark, and Figure~\ref{fig:errt} shows how the error accumulates over time on three of them: both neuro-spectral models start from the exact initial condition and their error grows causally, whereas the error of the best MLP-based baseline is spread over the whole interval; on Marmousi the gap between \method{} and NeuSA widens with time. 

\begin{figure}[h]
    \centering
    \includegraphics[width=\textwidth]{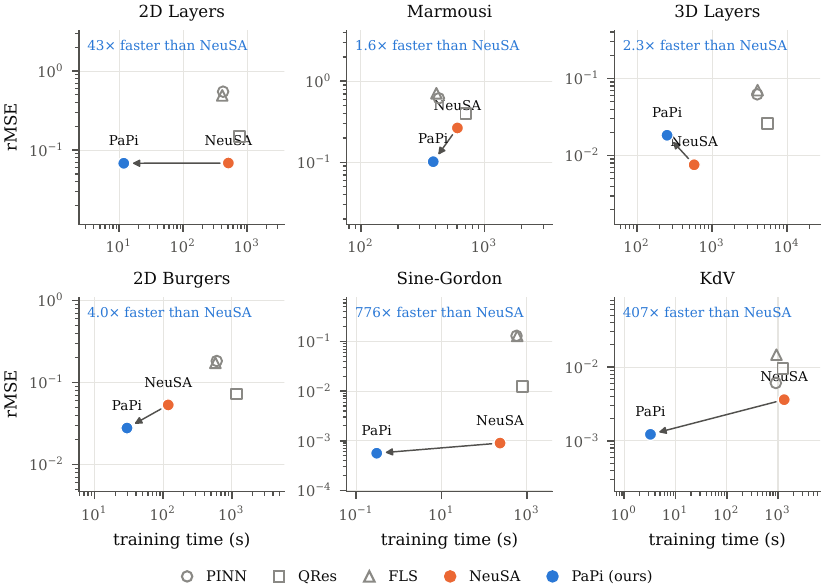}
    \caption{Training time against rMSE on every benchmark of Table~\ref{tab:main} (means over five seeds, one RTX~5090). The arrow joins NeuSA to \method{}: the same model, trained through the parallel Picard--Duhamel rollout instead of the sequential RK4 one.}
    \label{fig:all}
\end{figure}

\begin{figure}[h]
    \centering
    \includegraphics[width=\textwidth]{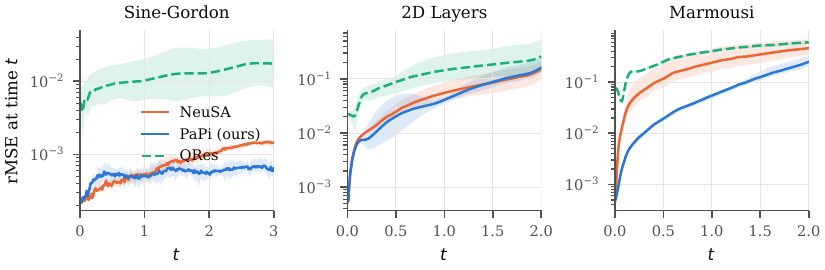}
    \caption{Relative $L^2$ error at each time $t$ (mean over five seeds, band from best to worst seed) for \method{}, NeuSA and the best MLP-based baseline, QRes.}
    \label{fig:errt}
\end{figure}

\begin{table}[h]
\centering
\caption{Main benchmarks: mean $\pm$ std over five seeds, all on the same RTX~5090.}
\label{tab:full}
\small
\begin{tabular}{l l r r r}
\toprule
Problem & Model & rMSE & rMAE & TT (s)\\
\midrule
2D Layers & PINN & 5.5e-1 $\pm$ 8.9e-2 & 7.8e-1 $\pm$ 1.3e-1 & 420 $\pm$ 56\\
 & QRes & 1.5e-1 $\pm$ 7.2e-2 & 1.9e-1 $\pm$ 7.5e-2 & 765 $\pm$ 212\\
 & FLS & 4.9e-1 $\pm$ 5.2e-2 & 7.2e-1 $\pm$ 8.3e-2 & 408 $\pm$ 60\\
 & NeuSA & 6.9e-2 $\pm$ 2.0e-2 & 9.4e-2 $\pm$ 3.1e-2 & 508 $\pm$ 19\\
 & \method{} & 6.8e-2 $\pm$ 1.8e-2 & 9.1e-2 $\pm$ 2.7e-2 & 11.8 $\pm$ 0.0\\
\midrule
Marmousi & PINN & 6.2e-1 $\pm$ 3.5e-2 & 9.2e-1 $\pm$ 8.1e-2 & 427 $\pm$ 68\\
 & QRes & 4.0e-1 $\pm$ 1.5e-2 & 5.4e-1 $\pm$ 2.0e-2 & 707 $\pm$ 181\\
 & FLS & 7.1e-1 $\pm$ 6.3e-2 & 1.06 $\pm$ 1.2e-1 & 408 $\pm$ 54\\
 & NeuSA & 2.7e-1 $\pm$ 9.4e-2 & 3.4e-1 $\pm$ 1.0e-1 & 604 $\pm$ 191\\
 & \method{} & 1.5e-1 $\pm$ 1.5e-2 & 2.0e-1 $\pm$ 1.5e-2 & 65.7 $\pm$ 0.3\\
\midrule
3D Layers & PINN & 6.2e-2 $\pm$ 6.4e-3 & 1.4e-1 $\pm$ 2.1e-2 & 4000 $\pm$ 172\\
 & QRes & 2.6e-2 $\pm$ 1.6e-2 & 4.4e-2 $\pm$ 2.1e-2 & 5496 $\pm$ 43\\
 & FLS & 7.1e-2 $\pm$ 1.5e-2 & 1.8e-1 $\pm$ 4.9e-2 & 4050 $\pm$ 23\\
 & NeuSA & 7.6e-3 $\pm$ 7.7e-4 & 1.1e-2 $\pm$ 2.7e-3 & 577 $\pm$ 13\\
 & \method{} & 1.8e-2 $\pm$ 1.1e-2 & 2.9e-2 $\pm$ 1.7e-2 & 251 $\pm$ 1\\
\midrule
2D Burgers & PINN & 1.8e-1 $\pm$ 7.2e-2 & 1.3e-1 $\pm$ 6.8e-2 & 605 $\pm$ 87\\
 & QRes & 7.3e-2 $\pm$ 1.3e-3 & 2.3e-2 $\pm$ 1.6e-3 & 1178 $\pm$ 348\\
 & FLS & 1.7e-1 $\pm$ 7.0e-2 & 1.2e-1 $\pm$ 6.8e-2 & 578 $\pm$ 68\\
 & NeuSA & 5.3e-2 $\pm$ 5.4e-3 & 4.3e-2 $\pm$ 4.3e-3 & 119 $\pm$ 4\\
 & \method{} & 2.8e-2 $\pm$ 1.7e-3 & 2.1e-2 $\pm$ 2.4e-3 & 29.6 $\pm$ 2.1\\
\midrule
Sine-Gordon & PINN & 1.3e-1 $\pm$ 2.3e-2 & 1.4e-1 $\pm$ 2.4e-2 & 569 $\pm$ 16\\
 & QRes & 1.3e-2 $\pm$ 7.1e-3 & 1.8e-2 $\pm$ 1.1e-2 & 781 $\pm$ 28\\
 & FLS & 1.3e-1 $\pm$ 3.0e-2 & 1.4e-1 $\pm$ 2.9e-2 & 594 $\pm$ 17\\
 & NeuSA & 9.0e-4 $\pm$ 1.7e-5 & 1.2e-3 $\pm$ 3.2e-5 & 234 $\pm$ 9\\
 & \method{} & 5.6e-4 $\pm$ 7.0e-5 & 8.1e-4 $\pm$ 8.1e-5 & 0.3 $\pm$ 0.0\\
\midrule
KdV & PINN & 6.1e-3 $\pm$ 7.6e-4 & 9.0e-3 $\pm$ 1.1e-3 & 908 $\pm$ 17\\
 & QRes & 9.5e-3 $\pm$ 1.0e-2 & 1.7e-2 $\pm$ 1.9e-2 & 1240 $\pm$ 29\\
 & FLS & 1.5e-2 $\pm$ 1.3e-2 & 2.9e-2 $\pm$ 3.2e-2 & 935 $\pm$ 16\\
 & NeuSA & 3.6e-3 $\pm$ 1.2e-3 & 6.5e-3 $\pm$ 1.9e-3 & 1322 $\pm$ 109\\
 & \method{} & 1.2e-3 $\pm$ 4.7e-4 & 2.2e-3 $\pm$ 7.5e-4 & 3.2 $\pm$ 0.0\\
 & \method{}, 101 modes & 1.4e-3 $\pm$ 7.7e-4 & 2.5e-3 $\pm$ 1.3e-3 & 3.4 $\pm$ 0.0\\
\bottomrule
\end{tabular}
\end{table}

\section{Number of Picard sweeps at inference}\label{app:sweeps}

Training uses two warm-started sweeps per optimiser step (six on Marmousi), but the trained model is deployed with a cold Picard--Duhamel solve starting from the linear flow $\Phi_0^tX_0$. Figure~\ref{fig:sweeps} scores every trained model after $k=1,2,\ldots$ sweeps. On all problems but KdV the error is stationary from eight sweeps on, so eight were used throughout; at four sweeps the iteration has not converged on the problems with the largest learned term (2D Layers, Marmousi, 2D Burgers). KdV, whose nonlinear term $6u\partial_xu$ is not small relative to the horizon, needs 32 sweeps; the cold iteration converges between 16 and 32 sweeps and does not change afterwards. This is consistent with Theorem~\ref{thm:convergence}: the number of sweeps required grows with $\varepsilon kT$ and is independent of the stiffness of $A$. A sweep costs a fraction of a millisecond on the 1D problems and up to a few hundred milliseconds on Marmousi, so inference is dominated by the 8 to 32 parallel sweeps rather than by 201 sequential RK4 steps.

\begin{figure}[h]
    \centering
    \includegraphics[width=0.62\textwidth]{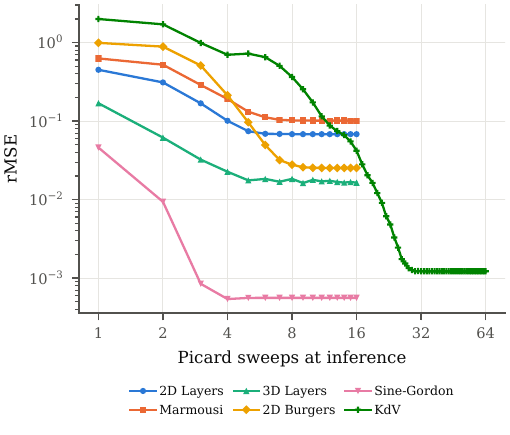}
    \caption{rMSE of the trained \method{} models after $k$ cold Picard--Duhamel sweeps from the linear flow, mean over five seeds. Every problem but KdV is converged by eight sweeps.}
    \label{fig:sweeps}
\end{figure}

\section{JAX implementation and comparison with the PyTorch RK4 rollout}\label{app:jax}

\paragraph{The port.}
\method{} is implemented in a JAX library that also contains a port of NeuSA: the same spectral bases (analysis and synthesis matrices, derivative operators), the same dimension-wise MLP, the same first-order $(\hat u,\hat v)$ formulation of the second-order problems, the same residual losses and metrics, and an explicit RK4 rollout. Two things differ from the PyTorch original. First, the basis matrices are assembled in double precision and then cast, which reduces the round-trip error of the transforms at 201 modes by a factor of six to ten; second, the RK4 rollout is a single \texttt{lax.scan} over the time grid and the optimiser steps are fused into scanned chunks, instead of torchdyn's Python loop over stages and steps. The arithmetic is otherwise identical. Equivalence with the original is checked by a parity test suite: with identical weights copied across, the two implementations agree on the basis matrices, the projected initial condition, the RK4 trajectory, the residual and its gradient, and the 2D pipeline is pinned by scoring the NeuSA authors' released Marmousi checkpoint at its reported error. The parallel scans of \method{} are the \texttt{associative\_scan} primitive of JAX,
with the propagator applied mode by mode.

\paragraph{How much of the speedup is the framework?}
Table~\ref{tab:jax} compares, on every problem and the same hardware, the PyTorch NeuSA (torchdyn RK4), the JAX RK4 rollout and \method{}, with the time per optimiser step, since the step counts differ on some problems. The JAX RK4 rollout is $14$--$20$ times faster per step than torchdyn on the 1D problems, whose per-step cost in PyTorch is dominated by the roughly $4n_t$ dependent kernel launches of the rollout ($800$ for sine-Gordon, $4{,}000$ for KdV), and $1$--$5$ times faster on the 2D and 3D problems, where each RK4 stage is a large tensor contraction and the framework overhead is negligible ($1.7\times$ on 3D Layers, $4$--$5\times$ on the 2D problems). The remaining factor, RK4 in JAX against \method{} in JAX, isolates the integrator: $56\times$ on sine-Gordon, $10\times$ on 2D Layers, $4\times$ on 2D Burgers, and $2$--$3\times$ on Marmousi and 3D Layers, where a Picard sweep costs a sizeable fraction of an RK4 rollout because the state is large enough for the GPU to be saturated either way. On KdV the per-step factor is $210\times$, the largest of all, because the stiff dispersion forces the RK4 rollout through 1{,}001 steps while the exact propagator makes the cost of a sweep independent of the stiffness.

The JAX RK4 rollout also reproduces the accuracy of the PyTorch NeuSA within seed variability (Table~\ref{tab:jax}), and on the four benchmarks where the authors' code reproduces their paper, both do; the JAX rollout is slightly more accurate on sine-Gordon, which we attribute to the double-precision basis assembly.

\begin{table}[h]
\centering
\caption{Same model, three integrators, same RTX~5090: mean over five seeds of rMSE, optimiser steps, training time and time per step. ``NeuSA'' is the authors' PyTorch code (torchdyn RK4); ``RK4 (JAX)'' our port of the same rollout; both ratios are of times per step.}
\label{tab:jax}
\scriptsize
\setlength{\tabcolsep}{3pt}
\begin{tabular}{l rrr rrr rrr rr}
\toprule
& \multicolumn{3}{c}{NeuSA (PyTorch)} & \multicolumn{3}{c}{RK4 (JAX)} & \multicolumn{3}{c}{\method{} (JAX)} & \multicolumn{2}{c}{speedup / step}\\
\cmidrule(lr){2-4}\cmidrule(lr){5-7}\cmidrule(lr){8-10}\cmidrule(lr){11-12}
Problem & rMSE & steps & TT (s) & rMSE & steps & TT (s) & rMSE & steps & TT (s) & torch/JAX & RK4/\method{}\\
\midrule
2D Layers & 6.9e-2 & 2{,}000 & 508 & 9.1e-2 & 2{,}000 & 117 & 6.8e-2 & 2{,}000 & 11.8 & 4.4 & 9.9\\
Marmousi & 2.7e-1 & 2{,}000 & 604 & 1.9e-1 & 5{,}000 & 290 & 1.5e-1 & 2{,}500 & 66 & 5.2 & 2.2\\
3D Layers & 7.6e-3 & 2{,}000 & 577 & 5.3e-3 & 4{,}000 & 676 & 1.8e-2 & 4{,}000 & 251 & 1.7 & 2.7\\
2D Burgers & 5.3e-2 & 200 & 119 & 2.5e-2 & 1{,}000 & 123 & 2.8e-2 & 1{,}000 & 29.6 & 4.8 & 4.2\\
Sine-Gordon & 9.0e-4 & 1{,}000 & 234 & 7.3e-4 & 1{,}000 & 16.7 & 5.6e-4 & 1{,}000 & 0.3 & 14 & 56\\
KdV & 3.6e-3 & 1{,}000 & 1322 & 3.3e-3 & 1{,}000 & 67.0 & 1.2e-3 & 10{,}000 & 3.2 & 20 & 210\\
\bottomrule
\end{tabular}
\end{table}

\paragraph{Stability.}
The explicit rollout constrains the discretisation in a way \method{} does not. On the padded Marmousi grid the largest eigenvalue of the wave operator gives $\Delta t\,\omega_{\max}\approx2.2$ at NeuSA's $\Delta t=0.01$, just inside RK4's stability limit of about $2.8$ on the imaginary axis; a coarser time grid produces NaN rather than a worse answer. On KdV the same limit caps the number of modes at 81 for $\Delta t=10^{-3}$, and doubling the resolution requires eight times more steps. For the inverse Burgers problem the viscous term gives $\Delta t\,\lambda_{\max}\approx24$ on the data grid, so the RK4 baseline of our port has to sub-step the rollout by a factor of ten to twenty. In all three cases the exact propagator of \method{} is unconditionally stable, and its time grid is chosen to resolve the nonlinear correction alone; with the exponential trapezoidal rule the training grid can even be coarser than the evaluation grid, as on 2D and 3D Layers.
% --- End Inserted File ---

% ---- Inserted File ----
\section{Sine-Gordon ablation study}
\label{app:pd-ablation}
Two parts of our method (the solver and training) are explored in this section considering a single controlled problem (the Sine-Gordon equation). We observe that exact linear propagation is what makes the fixed-point iteration converge at all, and training accuracy is set mainly by the step size, with Adam-type methods at $\eta_0\approx0.01$ reaching errors below $10^{-3}$.

\subsection{Setup}
\label{app:pd-setup}
We solve $u_{tt}=u_{xx}-10\sin u$ on $[-4,4]\times[0,3]$ with $u(\pm4,t)=0$, $u_t(x,0)=0$, and a Gaussian pulse $u(x,0)=e^{-x^2/(2\sigma^2)}/(\sqrt{2\pi}\sigma)$, $\sigma=0.1$, which splits into two counter-propagating waves that the nonlinearity keeps reshaping. A sine expansion in $M=201$ modes enforces the boundary conditions; the nonlinearity is evaluated at the nodes without dealiasing. Writing $y=(a,\dot a)$ for the coefficients and velocities gives $\dot y=Ly+N(y)$, with wave operator $L$ and nonlinear force $N$. We report the relative Frobenius error $E=\|U-U_\star\|_{\mathrm F}/\|U_\star\|_{\mathrm F}$ over the time--space grid against a reference $U_\star$ from adaptive eighth-order Dormand-Prince integration of the same system (accurate to $6\times10^{-12}$). Since the spatial discretization is shared, $E$ measures only time-integration and, later, learning error.

\subsection{Solver: exact linear propagation drives convergence}
\label{app:pd-solver}
The two fixed-point iterations we compare differ in what they leave to quadrature. Plain Picard iteration integrates the whole right-hand side, whereas Picard--Duhamel (PD) propagates the linear part exactly through $\Phi(t)=e^{tL}$ and iterates only on the nonlinear force:
\begin{equation}
 y_{\rm P}^{k+1}(t)=y_0+\!\int_0^t\![Ly_{\rm P}^k+N(y_{\rm P}^k)]\,ds,
 \qquad
 y_{\rm PD}^{k+1}(t)=\Phi(t)y_0+\!\int_0^t\!\Phi(t-s)N(y_{\rm PD}^k(s))\,ds,
 \label{eq:pd-updates}
\end{equation}
starting from $y_0$ and $\Phi(t)y_0$, respectively. A single PD sweep is the classical Duhamel approximation. In both cases the integrals are evaluated with the trapezoidal rule and parallel prefix sums, without damping or time windowing. We run both iterations on 101, 201, and 401 time nodes for $K\in\{1,\dots,32\}$ sweeps.

\begin{table}[h]
\centering\small
\caption{Only PD converges. Relative error $E$ at fixed sweep budgets; times
are medians of 20 warmed float64 GPU solves, excluding compilation.}
\label{tab:pd-results}
\begin{tabular}{rrrrr}
\toprule
Time nodes & Picard, $K=16$ & Duhamel, $K=1$ & PD, $K=16$ & PD time (ms)\\
\midrule
101 & $7.55\times10^{17}$ & 5.025 & $8.63\times10^{-4}$ & 3.50\\
201 & $7.04\times10^{17}$ & 5.021 & $2.16\times10^{-4}$ & 5.10\\
401 & $6.88\times10^{17}$ & 5.018 & $5.39\times10^{-5}$ & 7.76\\
\bottomrule
\end{tabular}
\end{table}

\begin{figure}[t]
\centering
\includegraphics[width=\linewidth]{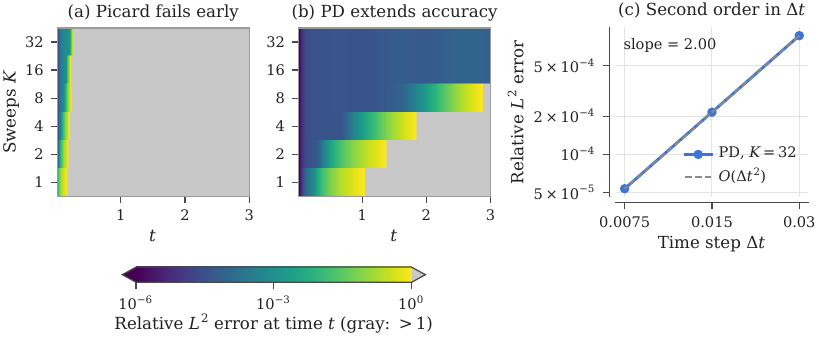}
\caption{Each PD sweep extends the accurate time window, while Picard fails
early at every budget. (a,b) Error at each time $t$ on the 401-node grid
(gray: above one); the bottom PD row is the Duhamel solution. (c) Once
converged ($K=32$), PD error falls at exactly second order in $\Delta t$.}
\label{fig:pd-convergence}
\end{figure}

The outcome is not close (Table~\ref{tab:pd-results}). Plain Picard never produces a usable solution at any of these budgets; by $K=16$ its error has grown past $10^{17}$. A single Duhamel sweep at least stays bounded, but it is still off by a factor of five. Repeating the Duhamel step is what closes the gap: on the 401-node grid, PD reaches $E=5.39\times10^{-5}$ in 7.8\,ms. Figure~\ref{fig:pd-convergence} shows why. Picard loses accuracy almost as soon as it starts, whereas each PD sweep pushes the accurate region further along the time axis, until after 16 sweeps it covers the whole horizon. Beyond that point additional sweeps only move the fifth significant digit, and what error remains is due to the quadrature: halving $\Delta t$ cuts it by a factor of four, with observed orders of 2.002 and 2.000. Nor is this a matter of PD starting from a better guess: crossing both updates with both starting paths, PD from the constant path reaches the same $5.39\times10^{-5}$, and Picard from the free wave still fails (Figure~\ref{fig:pd-validation-numerics}a).

\subsection{Training: step size dominates the choice of optimizer}
\label{app:pd-training}
For the training study we replace the exact force by a learned one, $0.1N_\theta(a)$, where $N_\theta$ is a two-layer LeakyReLU network of width 804 with 971{,}433 parameters. Training rolls the model out with RK4 on 201 time nodes and minimizes the mean squared residual of $u_{tt}-u_{xx}+10\sin u$ along that rollout, optionally with a penalty $\tfrac{\lambda}{2}\|\theta\|_2^2$ added; gradients flow through the entire rollout. Every run consists of 1,000 full-batch float32 updates and is repeated with three initialization seeds. We report the mean and sample standard deviation over seeds, which should be read as the spread due to initialization rather than as a confidence interval. The comparison covers Adam, AdamW (weight decay $10^{-4}$), RMSProp, momentum SGD, and Muon at initial learning rates $\eta_0\in\{0.001,0.01,0.02\}$ with decay $0.999^k$; Polyak SGD with step caps $\{0.01,0.1,1\}$; and L-BFGS with a zoom line search. A separate sweep adds $\lambda\in\{10^{-6},10^{-4},10^{-2}\}$ to Adam at $\eta_0=0.01$. Altogether this gives 22 configurations and 66 runs.

\begin{figure}[t]
\centering
\includegraphics[width=\linewidth]{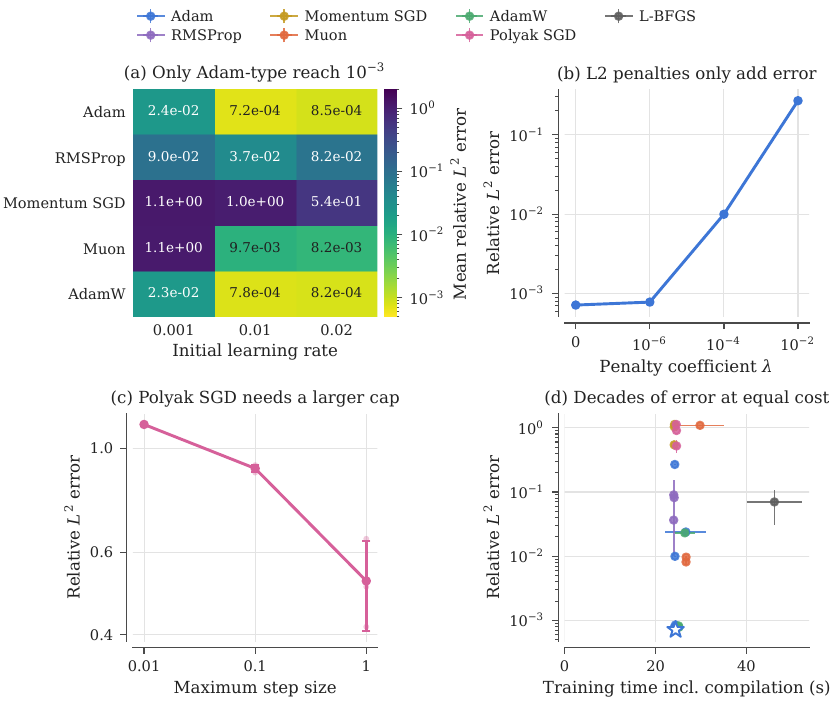}
\caption{Step size and optimizer family determine accuracy; regularization only
adds error. (a) Mean error per optimizer and initial learning rate. (b) L2
penalty for Adam at $\eta_0=0.01$. (c) Polyak SGD step-size cap. (d) All
configurations against training time; the star marks Adam at $\eta_0=0.01$.
Error bars are standard deviations over three seeds.}
\label{fig:pd-training-sensitivity}
\end{figure}

\paragraph{Step size matters more than the optimizer family.} Only the Adam-type methods get below $10^{-3}$, and they do so only once the learning rate is large enough (Figure~\ref{fig:pd-training-sensitivity}a). The most accurate configuration is Adam at $\eta_0=0.01$, with $E=(7.17\pm0.27)\times10^{-4}$; AdamW lands within its seed spread. Drop the rate to $\eta_0=0.001$ and both become thirty times worse. Muon reaches $8.2\times10^{-3}$ and RMSProp $3.7\times10^{-2}$, while momentum SGD and Polyak SGD never get below $0.5$ and improve only as their step is allowed to grow (panel c). L-BFGS stalls around $7\times10^{-2}$, and in float32 22\% of its line searches terminate without satisfying both acceptance criteria.

\paragraph{Explicit regularization only hurts.} Since the objective is a physics residual on a single trajectory, there is nothing to overfit, and an L2 penalty can only pull the solution away from the one that satisfies the equation. That is what happens: $\lambda=10^{-6}$, $10^{-4}$, and $10^{-2}$ raise the error by factors of 1.09, 13.9, and 374 (Figure~\ref{fig:pd-training-sensitivity}b). The decoupled weight decay of $10^{-4}$ in AdamW is mild enough to do no harm.

\paragraph{Cost is nearly uniform.} All first-order methods take about 24\,s per run (23--26\,ms per update), so the three orders of magnitude separating the configurations in Figure~\ref{fig:pd-training-sensitivity}(d) are not bought with compute. The exception is L-BFGS, whose line search costs 1,662 gradient evaluations and 46\,s.

\subsection{Robustness checks}
\label{app:pd-validation}
Three questions remain: how much of the training error is really integration error, whether the weaker optimizers were merely under-tuned or under-trained, and whether the L-BFGS failures are a float32 artifact. Each gets a targeted control on the same problem and architecture.

\begin{figure}[t]
\centering
\includegraphics[width=0.9\linewidth]{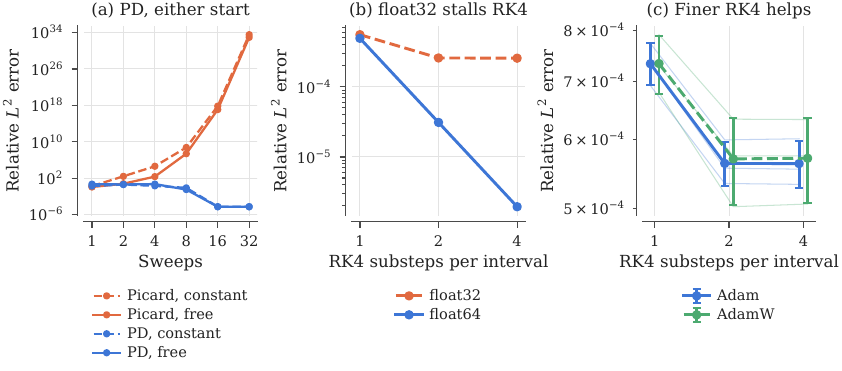}
\caption{The PD gain comes from the update, and part of the training error comes from time integration. (a) Both updates from both initial paths (401 nodes). (b) Exact-force RK4 in float32 and float64. (c) Frozen Adam and AdamW models evaluated with finer RK4. Panels (b,c) score the original 201 nodes.}
\label{fig:pd-validation-numerics}
\end{figure}

\begin{figure}[t]
\centering
\includegraphics[width=0.9\linewidth]{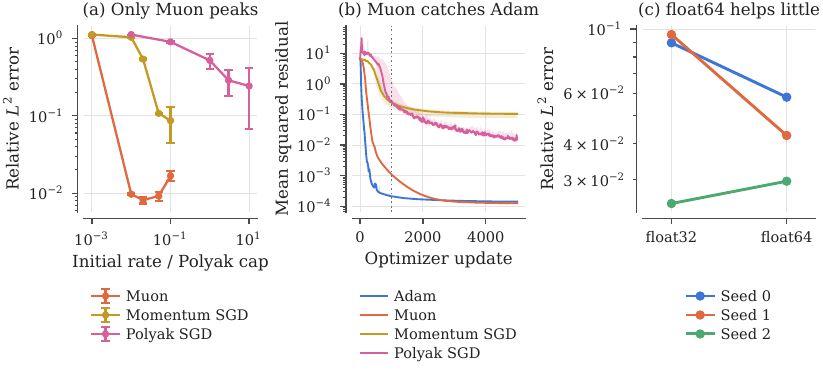}
\caption{Given enough updates Muon matches Adam, while float64 does not rescue L-BFGS. (a) Extended step-size ranges at 1,000 updates. (b) Residuals continued to 5,000 updates: seed median with min--max band, smoothed by a 51-update running median (dotted line: original budget). (c) L-BFGS in float32 and float64 from identical initial parameters.}
\label{fig:pd-validation-optimization}
\end{figure}

\paragraph{Integration error.} Even with the exact force, RK4 on the training grid already incurs $E=4.93\times10^{-4}$ in float64. Refining to four substeps per interval brings this down to $1.94\times10^{-6}$, but in float32 the same refinement stalls at $2.6\times10^{-4}$ (Figure~\ref{fig:pd-validation-numerics}b). The learned models are affected the same way: freezing the trained Adam and AdamW models and evaluating them with finer RK4 lowers their error from $7.33\times10^{-4}$ to $5.6$--$5.7\times10^{-4}$ (panel c). A good part of the reported training error is therefore integration error, not a deficiency of the learned force.

\paragraph{Tuning and budget.} To check whether the weaker methods were held back by their step-size grid or by the training budget, we extended the grids, picked the best setting for each method by its final training residual (never by the reference error), and continued training to 5,000 updates (Figure~\ref{fig:pd-validation-optimization}a,b). For Muon the optimum lies inside the grid at $\eta_0=0.02$, and with the longer budget it catches up with Adam, at $5.96\times10^{-4}$ against $6.27\times10^{-4}$. Momentum SGD and Polyak SGD, by contrast, keep improving right up to the largest steps we tested and still only reach $2$--$4\times10^{-2}$. The ranking at 1,000 updates therefore says as much about how fast each method converges as about where it ends up.

\paragraph{Precision.} Rerunning L-BFGS in float64 from the same initial parameters eliminates the line-search failures entirely, yet the error only improves to $(4.35\pm1.42)\times10^{-2}$, and for one seed it gets worse (Figure~\ref{fig:pd-validation-optimization}c). Precision explains the failed line searches, but not why L-BFGS does poorly here.

\subsection{Conclusion}
\label{app:pd-conclusion}
Exact linear propagation is the decisive design choice. It costs a few milliseconds, and it turns an iteration that fails at every budget we tried into one that converges in 16 sweeps to the quadrature floor, leaving only the second-order error set by the time step. On the training side, Adam or AdamW at $\eta_0\approx0.01$ with no explicit regularization is the most accurate choice at the standard budget, and the gaps between optimizers say more about how quickly they converge than how far they can go, as Muon's catching up at 5,000 updates shows. The best learned models settle around $6\times10^{-4}$, the same order as the exact-force RK4 error on the training grid ($4.9\times10^{-4}$) and ten times the converged PD solver. Further gains are therefore more likely to come from the time integrator used during training than from more optimizer tuning, which argues for pairing the learned force with the Picard--Duhamel solver itself.
% --- End Inserted File ---

\section{Other implementation details}

\subsection{Parallel Scans}\label{app:parallel_scans}

A parallel scan is an efficient way of evaluating a recurrence of the form
\begin{equation}
	h_t = A_t h_{t-1} + b_t,
	\qquad t=1,\ldots,T,
\end{equation}
without explicitly stepping through the sequence one element at a time. The key observation is that each step can be represented as an affine transformation
\[
h \mapsto A_t h+b_t,
\]
and that affine transformations can be composed associatively. Writing
\[
(A_2,b_2)\circ(A_1,b_1)
=
(A_2A_1,;A_2b_1+b_2),
\]
the entire sequence can therefore be evaluated using a standard parallel prefix-scan algorithm.

Conceptually, the sequential computation
\[
h_0 \longrightarrow h_1 \longrightarrow h_2
\longrightarrow h_3 \longrightarrow \cdots \longrightarrow h_T
\]
is replaced by a tree of compositions:
\begin{equation}
	\begin{array}{cccccccc}
		x_1 & x_2 & x_3 & x_4 & x_5 & x_6 & x_7 & x_8 \\[2mm]
		\downarrow & \downarrow & \downarrow & \downarrow
		& \downarrow & \downarrow & \downarrow & \downarrow \\[2mm]
		\multicolumn{2}{c}{x_1\circ x_2}
		&
		\multicolumn{2}{c}{x_3\circ x_4}
		&
		\multicolumn{2}{c}{x_5\circ x_6}
		&
		\multicolumn{2}{c}{x_7\circ x_8}
		\\[4mm]
		\multicolumn{4}{c}{\searrow\hspace{1.5cm}\searrow}
		&
		\multicolumn{4}{c}{\searrow\hspace{1.5cm}\searrow}
		\\[-1mm]
		\multicolumn{4}{c}{(x_1\circ x_2)\circ(x_3\circ x_4)}
		&
		\multicolumn{4}{c}{(x_5\circ x_6)\circ(x_7\circ x_8)}
		\\[4mm]
		\multicolumn{8}{c}{\searrow\hspace{5cm}\searrow}
		\\[2mm]
		\multicolumn{8}{c}{x_1\circ x_2\circ x_3\circ x_4
			\circ x_5\circ x_6\circ x_7\circ x_8}
	\end{array}
\end{equation}so that the computational depth is reduced from $\mathcal{O}(T)$ to $\mathcal{O}(\log T)$, at the cost of $\mathcal{O}(T)$ total work.

This structure is particularly well suited to GPUs. Although the scan performs more operations than a strictly sequential recurrence, many of these operations are independent and can be executed simultaneously across thousands of GPU threads. Moreover, the computation consists primarily of regular tensor operations with predictable memory access patterns, allowing modern GPU kernels to exploit massive parallelism and high memory bandwidth. In practice, this can make a parallel scan substantially faster than a sequential recurrence for long sequences, even when the underlying recurrence itself is unchanged.

The same idea extends beyond scalar or diagonal recurrences to more general associative state-update operators. This is one reason parallel scans are useful for state-space models and other sequence architectures: the temporal structure is retained, while the sequential dependency that normally prevents efficient GPU utilization is replaced by an associative reduction. 

\subsection{Exponential trapezoidal quadrature}
\label{app:exponential_trapezoidal}

The trapezoidal rule introduced in \eqref{eq:trapezoidal_method_def} approximates the integrand in \eqref{eq:duhamel-scheme}, $\Phi_s^t F(X^{(k)}(s))$, by a piecewise-linear function. However in the setting when $\Phi_s^t$ has a closed formula, for instance when $A(t)=A$ is constant and $\Phi_s^t  = \exp(A (t-s))$, it is often preferable to instead approximate $F(X^{(k)}(s)$ by a piecewise-linear function $\iota(s)$, and evaluate $\int_0^t
\Phi_s^t \iota(s)\,\mathrm{d}s$ exactly. This leads to the exponential trapezoidal rule. The construction below follows the standard framework of
exponential quadrature; see, for example,
\cite{hochbruck2010exponential}.

We first recall the $\varphi$-functions used in exponential integration. We may define them by their Taylor expansion
$$
\varphiFun_{\ell}(z):=\sum_{n=0}^{\infty} \frac{z^{n}}{(n+\ell)!}
$$
With most relevance to our setting are the first three $\varphi$-functions,
\[
\varphiFun_0(z)=e^z,
\qquad
\varphiFun_1(z)=\frac{e^z-1}{z},
\qquad
\varphiFun_2(z)=\frac{e^z-1-z}{z^2}.
\]
The Taylor expansion allows to define the $\varphi$ functions also for matrices, in a similar way to the definition of matrix exponential.

Now let $\Phi_s^t  = \exp(A (t-s))$ then it the definition of $\varphi$ implies that
\begin{equation}
\int_{0}^{t}\Phi_s^t (a (t-s) + b s)\,\mathrm{d}s = \left(\varphiFun_1(tA)-\varphiFun_2(tA)\right) a
+
\varphiFun_2(A)b.
\label{eq:exponential_integral}
\end{equation}
Letting $P=P_j = e^{h A}$, $B_0= \varphiFun_1(hA)-\varphiFun_2(hA)$ and $B_1= \varphiFun_2(hA)$,  \eqref{eq:exponential_integral} leads to the exponential trapezoidal scheme, analog to \eqref{eq:trapezoidal_method_def}:
\begin{align}
	\label{eq:exponential_trapezoidal_global}
	\bar X_{\ET,j}^{(k+1)}
	:=&
	\Phi_{0}^{t_j}X_0
	+\varepsilon h
	\sum_{i=0}^{j-1}
	\Phi_{t_{i+1}}^{t_j}
	\left(B_0g_{\ET,i}^{(k)}+B_1g_{\ET,i+1}^{(k)}\right),
\end{align}
where $g_{\ET,i}^{(k)} = F(\bar{X}_{\ET,i}^{(k)})$. 
Finally, and most crucially, this integration scheme also enjoys a recursion similar to that of the trapezoidal method, that allows it to be calculated with a parallel scan:
\begin{equation}
	\label{eq:exponential_trapezoidal_recurrence}
	\overline X_{\ET,j+1}^{(k+1)}
	=
	P\overline X_{\ET,j}^{(k+1)}
	+b_{\ET,j}^{(k)},\quad\text{where}\qquad 
	b_{\ET,j}^{(k)}
	:=
	\varepsilon
	\left(B_0g_{ET,j}^{(k)}+B_1g_{ET,j+1}^{(k)}\right).
\end{equation}


\begin{thebibliography}{34}
\providecommand{\natexlab}[1]{#1}
\providecommand{\url}[1]{\texttt{#1}}
\expandafter\ifx\csname urlstyle\endcsname\relax
  \providecommand{\doi}[1]{doi: #1}\else
  \providecommand{\doi}{doi: \begingroup \urlstyle{rm}\Url}\fi

\bibitem[Balestro et~al.(2026)Balestro, Marques, Mendon{\c{c}}a, Moreira,
  de~Oliveira, Ganacim, Novello, Petrov, Yukimura, and
  Nissenbaum]{balestro2026neurospectral}
Vitor Balestro, M{\'a}rcio Marques, Leonardo Mendon{\c{c}}a, Leonardo~M.
  Moreira, Christian~J{\'u}nior de~Oliveira, Francisco Ganacim, Tiago Novello,
  Pavel Petrov, Daniel Yukimura, and Lucas Nissenbaum.
\newblock Neuro-spectral architectures with time-domain decomposition.
\newblock In \emph{ICLR 2026 Workshop on AI and Partial Differential
  Equations}, 2026.

\bibitem[Bizzi et~al.(2025)Bizzi, Moreira, Marques, Mendon\c{c}a, de~Oliveira,
  Balestro, dos Santos~Fernandez, Yukimura, Petrov, Pereira, Novello, and
  Nissenbaum]{bizzi-neusa}
Arthur Bizzi, Leonardo Moreira, M\'{a}rcio Marques, Leonardo Mendon\c{c}a,
  Christian de~Oliveira, Vitor Balestro, Lucas dos Santos~Fernandez, Daniel
  Yukimura, Pavel Petrov, Jo\~{a}o Pereira, Tiago Novello, and Lucas
  Nissenbaum.
\newblock Neuro-spectral architectures for causal physics-informed networks.
\newblock In D.~Belgrave, C.~Zhang, H.~Lin, R.~Pascanu, P.~Koniusz,
  M.~Ghassemi, and N.~Chen (eds.), \emph{Advances in Neural Information
  Processing Systems}, volume 38, Main Conference, pp.\  54419--54449. Curran
  Associates, Inc., 2025.
\newblock \doi{10.52202/085713-1818}.
\newblock URL
  \url{https://proceedings.neurips.cc/paper_files/paper/2025/file/4e52bbb99690d1e05c7ef7b4c8b3569a-Paper-Conference.pdf}.

\bibitem[Boyd(2013)]{boyd2001spectralmethods}
J.P. Boyd.
\newblock \emph{Chebyshev and Fourier Spectral Methods: Second Revised
  Edition}.
\newblock Dover Books on Mathematics. Dover Publications, 2013.
\newblock ISBN 9780486141923.
\newblock URL \url{https://books.google.com.br/books?id=b4TCAgAAQBAJ}.

\bibitem[Bradbury et~al.(2018)Bradbury, Frostig, Hawkins, Johnson, Katariya,
  Leary, Maclaurin, Necula, Paszke, Vander{P}las, Wanderman-{M}ilne, and
  Zhang]{jax2018github}
James Bradbury, Roy Frostig, Peter Hawkins, Matthew~James Johnson, Yash
  Katariya, Chris Leary, Dougal Maclaurin, George Necula, Adam Paszke, Jake
  Vander{P}las, Skye Wanderman-{M}ilne, and Qiao Zhang.
\newblock {JAX}: composable transformations of {P}ython+{N}um{P}y programs,
  2018.
\newblock URL \url{http://github.com/jax-ml/jax}.

\bibitem[Bu \& Karpatne(2021)Bu and Karpatne]{bu2021qres}
Jie Bu and Anuj Karpatne.
\newblock Quadratic residual networks: A new class of neural networks for
  solving forward and inverse problems in physics involving pdes.
\newblock In \emph{Proceedings of the 2021 SIAM International Conference on
  Data Mining (SDM)}, pp.\  675--683. SIAM, 2021.

\bibitem[Canuto et~al.(2006)Canuto, Hussaini, Quarteroni, and
  Zang]{canuto2007spectral}
Claudio Canuto, M.~Yousuff Hussaini, Alfio Quarteroni, and Thomas~A. Zang.
\newblock \emph{Spectral Methods: Fundamentals in Single Domains}.
\newblock Springer, 2006.
\newblock \doi{10.1007/978-3-540-30726-6}.

\bibitem[Chen et~al.(2018)Chen, Rubanova, Bettencourt, and
  Duvenaud]{chen2018neural}
Ricky T.~Q. Chen, Yulia Rubanova, Jesse Bettencourt, and David~K Duvenaud.
\newblock Neural ordinary differential equations.
\newblock In S.~Bengio, H.~Wallach, H.~Larochelle, K.~Grauman, N.~Cesa-Bianchi,
  and R.~Garnett (eds.), \emph{Advances in Neural Information Processing
  Systems}, volume~31. Curran Associates, Inc., 2018.
\newblock URL
  \url{https://proceedings.neurips.cc/paper_files/paper/2018/file/69386f6bb1dfed68692a24c8686939b9-Paper.pdf}.

\bibitem[DeepMind et~al.(2020)DeepMind, Babuschkin, Baumli, Bell, Bhupatiraju,
  Bruce, Buchlovsky, Budden, Cai, Clark, et~al.]{deepmind2020optax}
DeepMind, Igor Babuschkin, Kate Baumli, Alison Bell, Surya Bhupatiraju, Jake
  Bruce, Peter Buchlovsky, David Budden, Trevor Cai, Aidan Clark, et~al.
\newblock The {DeepMind} {JAX} ecosystem, 2020.
\newblock URL \url{http://github.com/google-deepmind}.

\bibitem[Gander \& Vandewalle(2007)Gander and Vandewalle]{gander2007analysis}
Martin~J. Gander and Stefan Vandewalle.
\newblock Analysis of the {Parareal} time-parallel time-integration method.
\newblock \emph{SIAM Journal on Scientific Computing}, 29\penalty0
  (2):\penalty0 556--578, 2007.
\newblock \doi{10.1137/05064607X}.

\bibitem[Hochbruck \& Ostermann(2010)Hochbruck and
  Ostermann]{hochbruck2010exponential}
Marlis Hochbruck and Alexander Ostermann.
\newblock Exponential integrators.
\newblock \emph{Acta Numerica}, 19:\penalty0 209–286, 2010.
\newblock \doi{10.1017/S0962492910000048}.

\bibitem[Jagtap \& Karniadakis(2020)Jagtap and Karniadakis]{jagtap2020extended}
Ameya~D. Jagtap and George~Em Karniadakis.
\newblock Extended physics-informed neural networks ({XPINNs}): A generalized
  space-time domain decomposition based deep learning framework for nonlinear
  partial differential equations.
\newblock \emph{Communications in Computational Physics}, 28\penalty0
  (5):\penalty0 2002--2041, 2020.
\newblock \doi{10.4208/cicp.OA-2020-0164}.

\bibitem[Karniadakis et~al.(2021)Karniadakis, Kevrekidis, Lu, Perdikaris, Wang,
  and Yang]{karniadakis2021physics}
George~Em Karniadakis, Ioannis~G. Kevrekidis, Lu~Lu, Paris Perdikaris, Sifan
  Wang, and Liu Yang.
\newblock Physics-informed machine learning.
\newblock \emph{Nature Reviews Physics}, 3:\penalty0 422--440, 2021.
\newblock \doi{10.1038/s42254-021-00314-5}.

\bibitem[Kidger \& Garcia(2021)Kidger and Garcia]{kidger2021equinox}
Patrick Kidger and Cristian Garcia.
\newblock Equinox: neural networks in jax via callable pytrees and filtered
  transformations.
\newblock \emph{arXiv preprint arXiv:2111.00254}, 2021.

\bibitem[Krishnapriyan et~al.(2021)Krishnapriyan, Gholami, Zhe, Kirby, and
  Mahoney]{krishnapriyan2021characterizing}
Aditi Krishnapriyan, Amir Gholami, Shandian Zhe, Robert Kirby, and Michael
  Mahoney.
\newblock Characterizing possible failure modes in physics-informed neural
  networks.
\newblock In M.~Ranzato, A.~Beygelzimer, Y.~Dauphin, P.S. Liang, and J.~Wortman
  Vaughan (eds.), \emph{Advances in Neural Information Processing Systems},
  volume~34, pp.\  26548--26560. Curran Associates, Inc., 2021.
\newblock URL
  \url{https://proceedings.neurips.cc/paper_files/paper/2021/file/df438e5206f31600e6ae4af72f2725f1-Paper.pdf}.

\bibitem[Lagaris et~al.(1998)Lagaris, Likas, and
  Fotiadis]{lagaris1998artificial}
Isaac~E. Lagaris, Aristidis Likas, and Dimitrios~I. Fotiadis.
\newblock Artificial neural networks for solving ordinary and partial
  differential equations.
\newblock \emph{IEEE Transactions on Neural Networks}, 9\penalty0 (5):\penalty0
  987--1000, 1998.
\newblock \doi{10.1109/72.712178}.

\bibitem[Lindel{\"o}f(1894)]{lindelof1894application}
Ernest Lindel{\"o}f.
\newblock Sur l'application des m{\'e}thodes d'approximations successives {\`a}
  l'{\'e}tude des int{\'e}grales r{\'e}elles des {\'e}quations
  diff{\'e}rentielles ordinaires.
\newblock \emph{Journal de math{\'e}matiques pures et appliqu{\'e}es},
  10:\penalty0 117--128, 1894.

\bibitem[Lions et~al.(2001)Lions, Maday, and Turinici]{lions2001parareal}
Jacques-Louis Lions, Yvon Maday, and Gabriel Turinici.
\newblock R{\'e}solution d'{EDP} par un sch{\'e}ma en temps ``parar{\'e}el''.
\newblock \emph{Comptes Rendus de l'Acad{\'e}mie des Sciences - Series I -
  Mathematics}, 332\penalty0 (7):\penalty0 661--668, 2001.
\newblock \doi{10.1016/S0764-4442(00)01793-6}.

\bibitem[Lu et~al.(2021)Lu, Meng, Mao, and Karniadakis]{lu2021deepxde}
Lu~Lu, Xuhui Meng, Zhiping Mao, and George~Em Karniadakis.
\newblock {DeepXDE}: A deep learning library for solving differential
  equations.
\newblock \emph{SIAM Review}, 63\penalty0 (1):\penalty0 208--228, 2021.
\newblock \doi{10.1137/19M1274067}.

\bibitem[McClenny \& Braga-Neto(2023)McClenny and Braga-Neto]{mcclenny2020self}
Levi~D. McClenny and Ulisses~M. Braga-Neto.
\newblock Self-adaptive physics-informed neural networks.
\newblock \emph{Journal of Computational Physics}, 474:\penalty0 111722, 2023.
\newblock \doi{https://doi.org/10.1016/j.jcp.2022.111722}.
\newblock URL
  \url{https://www.sciencedirect.com/science/article/pii/S0021999122007859}.

\bibitem[Pachpatte(1998)]{pachpatte1998inequalities}
B.~G. Pachpatte.
\newblock \emph{Inequalities for Differential and Integral Equations}, volume
  197 of \emph{Mathematics in Science and Engineering}.
\newblock Academic Press, San Diego, 1998.
\newblock ISBN 0-12-543430-8.

\bibitem[Raissi et~al.(2019)Raissi, Perdikaris, and
  Karniadakis]{raissi2019physics}
Maziar Raissi, Paris Perdikaris, and George~E. Karniadakis.
\newblock Physics-informed neural networks: A deep learning framework for
  solving forward and inverse problems involving nonlinear partial differential
  equations.
\newblock \emph{Journal of Computational Physics}, 378:\penalty0 686--707,
  2019.

\bibitem[Shih et~al.(2023)Shih, Belkhale, Ermon, Sadigh, and
  Anari]{shih2023parallel}
Andy Shih, Suneel Belkhale, Stefano Ermon, Dorsa Sadigh, and Nima Anari.
\newblock Parallel sampling of diffusion models.
\newblock \emph{Advances in Neural Information Processing Systems},
  36:\penalty0 4263--4276, 2023.

\bibitem[Shukla et~al.(2021)Shukla, Jagtap, and
  Karniadakis]{shukla2021parallel}
Khemraj Shukla, Ameya~D. Jagtap, and George~Em Karniadakis.
\newblock Parallel physics-informed neural networks via domain decomposition.
\newblock \emph{Journal of Computational Physics}, 447:\penalty0 110683, 2021.
\newblock \doi{10.1016/j.jcp.2021.110683}.

\bibitem[Tancik et~al.(2020)Tancik, Srinivasan, Mildenhall, Fridovich-Keil,
  Raghavan, Singhal, Ramamoorthi, Barron, and Ng]{tancik2020fourier}
Matthew Tancik, Pratul Srinivasan, Ben Mildenhall, Sara Fridovich-Keil, Nithin
  Raghavan, Utkarsh Singhal, Ravi Ramamoorthi, Jonathan Barron, and Ren Ng.
\newblock Fourier features let networks learn high frequency functions in low
  dimensional domains.
\newblock In H.~Larochelle, M.~Ranzato, R.~Hadsell, M.F. Balcan, and H.~Lin
  (eds.), \emph{Advances in Neural Information Processing Systems}, volume~33,
  pp.\  7537--7547. Curran Associates, Inc., 2020.
\newblock URL
  \url{https://proceedings.neurips.cc/paper_files/paper/2020/file/55053683268957697aa39fba6f231c68-Paper.pdf}.

\bibitem[Trefethen(2000)]{trefethen2000spectral}
Lloyd~N. Trefethen.
\newblock \emph{Spectral Methods in MATLAB}.
\newblock SIAM, 2000.
\newblock \doi{10.1137/1.9780898719598}.

\bibitem[Versteeg(1994)]{versteeg1994marmousi}
Roelof Versteeg.
\newblock The {M}armousi experience: Velocity model determination on a
  synthetic complex data set.
\newblock \emph{The Leading Edge}, 13\penalty0 (9):\penalty0 927--936, 1994.

\bibitem[Wang et~al.(2021{\natexlab{a}})Wang, Teng, and
  Perdikaris]{wang2021gradient}
Sifan Wang, Yujun Teng, and Paris Perdikaris.
\newblock Understanding and mitigating gradient flow pathologies in
  physics-informed neural networks.
\newblock \emph{SIAM Journal on Scientific Computing}, 43\penalty0
  (5):\penalty0 A3055--A3081, 2021{\natexlab{a}}.
\newblock \doi{10.1137/20M1318043}.

\bibitem[Wang et~al.(2021{\natexlab{b}})Wang, Wang, and
  Perdikaris]{wang2021eigenvector}
Sifan Wang, Hanwen Wang, and Paris Perdikaris.
\newblock On the eigenvector bias of {F}ourier feature networks: From
  regression to solving multi-scale {PDEs} with physics-informed neural
  networks.
\newblock \emph{Computer Methods in Applied Mechanics and Engineering},
  384:\penalty0 113938, 2021{\natexlab{b}}.
\newblock \doi{10.1016/j.cma.2021.113938}.

\bibitem[Wang et~al.(2022)Wang, Yu, and Perdikaris]{wang2022when}
Sifan Wang, Xinling Yu, and Paris Perdikaris.
\newblock When and why {PINNs} fail to train: A neural tangent kernel
  perspective.
\newblock \emph{Journal of Computational Physics}, 449:\penalty0 110768, 2022.
\newblock \doi{10.1016/j.jcp.2021.110768}.

\bibitem[Wang et~al.(2024{\natexlab{a}})Wang, Li, Chen, and
  Perdikaris]{wang2024piratenets}
Sifan Wang, Bowen Li, Yuhan Chen, and Paris Perdikaris.
\newblock {PirateNets}: Physics-informed deep learning with residual adaptive
  networks.
\newblock \emph{Journal of Machine Learning Research}, 25\penalty0
  (402):\penalty0 1--51, 2024{\natexlab{a}}.

\bibitem[Wang et~al.(2024{\natexlab{b}})Wang, Sankaran, and
  Perdikaris]{wang2022respecting}
Sifan Wang, Shyam Sankaran, and Paris Perdikaris.
\newblock Respecting causality for training physics-informed neural networks.
\newblock \emph{Computer Methods in Applied Mechanics and Engineering},
  421:\penalty0 116813, 2024{\natexlab{b}}.
\newblock \doi{10.1016/j.cma.2024.116813}.

\bibitem[Wong et~al.(2024)Wong, Ooi, Gupta, and Ong]{wong2024fls}
Jian~Cheng Wong, Chin~Chun Ooi, Abhishek Gupta, and Yew-Soon Ong.
\newblock Learning in sinusoidal spaces with physics-informed neural networks.
\newblock \emph{IEEE Transactions on Artificial Intelligence}, 5\penalty0
  (3):\penalty0 985--1000, 2024.
\newblock \doi{10.1109/TAI.2022.3192362}.

\bibitem[Yu et~al.(2022)Yu, Lu, Meng, and Karniadakis]{yu2022gradient}
Jeremy Yu, Lu~Lu, Xuhui Meng, and George~Em Karniadakis.
\newblock Gradient-enhanced physics-informed neural networks for forward and
  inverse {PDE} problems.
\newblock \emph{Computer Methods in Applied Mechanics and Engineering},
  393:\penalty0 114823, 2022.
\newblock \doi{10.1016/j.cma.2022.114823}.

\bibitem[Zhu \& Yin(2026)Zhu and Yin]{zhu2026perturbative}
Xianli Zhu and Jia Yin.
\newblock Perturbative-neusa: A structured spectral framework for
  time-dependent pdes.
\newblock \emph{arXiv preprint arXiv:2607.24345}, 2026.
\newblock \doi{10.48550/arXiv.2607.24345}.

\end{thebibliography}
\end{document}